\PassOptionsToPackage{table}{xcolor}
\documentclass[sigconf,nonacm]{acmart}
\usepackage{algorithm}
\usepackage{algorithmic}
\graphicspath{{Image/}}
\AtBeginDocument{%
  }

\copyrightyear{2026}
\acmYear{2026}
\setcopyright{cc}
\setcctype{by}
\acmConference[KDD 2026]
{Proceedings of the 32nd ACM SIGKDD Conference on Knowledge Discovery and Data Mining V.2}
{August 9--13, 2026}
{Jeju Island, Republic of Korea.}
\acmBooktitle{Proceedings of the 32nd ACM SIGKDD Conference on Knowledge Discovery and Data Mining V.2 (KDD 2026), August 9--13, 2026, Jeju Island, Republic of Korea}
\acmISBN{979-8-4007-2259-2/2026/08}
\acmDOI{10.1145/3770855.3818908}
\begin{document}

\title{APCyc: Property-Informed Design of Cyclic Peptides via Automated Cyclization}
\author{Yifan Zhao}
\authornote{Both authors contributed equally to this research.}
\email{yzhao642@connect.hkust-gz.edu.cn}
\affiliation{%
  \institution{\mbox{AI Thrust}}
  \institution{The Hong Kong University of Science and Technology (Guangzhou)}
  \city{Guangzhou}
  \country{China}
}

\author{Lang Qin}
\email{lqin969@connect.hkust-gz.edu.cn}
\authornotemark[1]
\affiliation{%
  \institution{AI Thrust}
  \institution{The Hong Kong University of Science and Technology (Guangzhou)}
  \city{Guangzhou}
  \country{China}
}
\author{Jintai Chen}
\email{jintaichen@hkust-gz.edu.cn}
\authornote{Corresponding author.}
\affiliation{%
  \institution{\mbox{AI-Peptide Drug Design Joint Laboratory}}
  \institution{The Hong Kong University of Science and Technology (Guangzhou)}
  \city{Guangzhou}
  \country{China}
}
\renewcommand{\shortauthors}{Yifan Zhao, Lang Qin, and Jintai Chen}

\begin{abstract}
Cyclic peptides represent a promising class of therapeutic compounds in modern drug discovery, often offering improved stability and binding affinity.
However, the \textit{de novo} design of cyclic peptides remains challenging because methods must identify pocket-adaptive cyclization patterns and linkage sites while simultaneously controlling drug-relevant properties.
This challenge is particularly pronounced for recent generative models trained predominantly on linear peptide data, which may fail to capture cyclization-specific constraints. 
To address the limitation, we introduce \textbf{APCyc}, a target-aware \textit{de novo} cyclic peptide generation framework that explicitly models cyclization and jointly optimizes multiple essential physicochemical properties. 
By using an expanded \textbf{residue vocabulary} and explicitly encoding \textbf{cyclization-site and linkage-type information}, APCyc learns cyclization-aware representations and leverages \textbf{Bayesian posterior guidance} to steer sampling toward cyclic peptides satisfying multiple property objectives. 
Experimental results demonstrate that our model learns target-dependent cyclization preferences, and enables effective and controllable multi-property optimization for cyclic peptide design. The source code of this paper is available at
https://github.com/HKUSTGZ-ML4Health-Lab/APCyc.
\end{abstract}

\begin{CCSXML}
<ccs2012>
<concept>
<concept_id>10010405.10010444.10010450</concept_id>
<concept_desc>Applied computing~Bioinformatics</concept_desc>
<concept_significance>500</concept_significance>
</concept>
<concept>
<concept_id>10010405.10010444.10010087</concept_id>
<concept_desc>Applied computing~Computational biology</concept_desc>
<concept_significance>500</concept_significance>
</concept>
<concept>
<concept_id>10010147.10010178</concept_id>
<concept_desc>Computing methodologies~Artificial intelligence</concept_desc>
<concept_significance>300</concept_significance>
</concept>
<concept>
<concept_id>10010147.10010257</concept_id>
<concept_desc>Computing methodologies~Machine learning</concept_desc>
<concept_significance>300</concept_significance>
</concept>
</ccs2012>
\end{CCSXML}

\ccsdesc[500]{Applied computing~Bioinformatics}
\ccsdesc[500]{Applied computing~Computational biology}
\ccsdesc[300]{Computing methodologies~Artificial intelligence}
\ccsdesc[300]{Computing methodologies~Machine learning}
\keywords{Cyclic Peptide; Generative Model; Diffusion Model}

\maketitle

\begin{center}
\small
Accepted at the 32nd ACM SIGKDD Conference on Knowledge Discovery and Data Mining (KDD 2026).
DOI: \href{https://doi.org/10.1145/3770855.3818908}{10.1145/3770855.3818908}.
\end{center}

\section{Introduction}

Cyclic peptides are short and synthetically accessible amino acid sequences characterized by macrocyclic structural constraints that pre-organize the peptide backbone into bioactive conformations~\citep{fosgerau2015peptide,ji2024cyclic}, thereby minimizing the entropic cost of binding to target proteins and making them attractive for therapeutic applications
\citep{vlieghe2010synthetic,lau2018therapeutic,driggers2008exploration, zorzi2017cyclic,liu2024cyclicpepedia}.
The advantages have motivated the development of computational methods for cyclic peptide discovery, with recent AI-based approaches increasingly used to model sequence--structure--property relationships and prioritize candidates from large combinatorial sequence spaces. However, designing therapeutically useful cyclic peptides remains challenging because candidate molecules must satisfy multiple coupled drug-relevant constraints beyond target binding, including solubility, permeability, protease resistance, metabolic stability, and safety-related properties \citep{otvos2014current}. Thus, cyclic peptide design is inherently a multi-objective optimization problem that requires methods capable of balancing therapeutic constraints rather than optimizing activity alone.

Recent target-aware peptide design methods have primarily focused on generating linear peptides conditioned on target (receptor) sequence and pocket geometry. Within this paradigm, geometry-aware generative models, including latent diffusion approach~\citep{kong2024full} or multimodal torsional flow matching~\cite{lin2024ppflow, li2024full},
have demonstrated the capability to perform full-atomic linear peptide design at target interfaces. Concurrently, approaches utilizing autoregressive generation \citep{li2024pephar}, reinforcement learning paradigms
\citep{oh2025denovo}, 
or pocket-conditioned diffusion models \citep{sayuti2025tldm}, have furthered the effective design of linear peptide binders across diverse targets~\citep{bhat2025novo}. However, these frameworks generally treat cyclization as a post-hoc structural modification rather than an integrated design variable~\citep{li2024cycpeptmp,rettie2025alphafold2cyclic}, failing to capture the complex, target-specific topological requirements inherent in high-affinity binding~\citep{rettie2025alphafold2cyclic,rettie2025macrocycle, wang2024diffpepbuilder}. 

Beyond this lack of structural integration~\citep{jiang2025cpcomposer}, existing frameworks rely on rigid, a priori cyclization heuristics~\citep{zhou2025harmonicsde}.
In these methods, the cyclization type and linkage site positions are typically predefined based on heuristics, prior assumptions, or trial-and-error, rather than inferred directly from the structural requirements of the target binding pocket. This decoupling is fundamentally flawed, as binding sites exhibit substantial geometric diversity, ranging from deep and groove-like pockets to flat or shallow surfaces
\citep{guo2015identification,garcia2023macrocycles} that inherently dictates the optimal peptide topology~\citep{keeling2016key}.
As a result, identifying the pocket-adaptive cyclization strategy in practical cyclic peptide discovery often requires costly enumeration of candidate cyclization strategies and iterative screening \citep{li2022cyclic,kim2025exploring}, motivating the development of automated models for selecting cyclization strategies conditioned on the target binding site.
Meanwhile, key drug-relevant properties such as permeability are tightly coupled to cyclization strategy, with the optimal choice varying across targets
\citep{chandramohan2024design,ji2024cyclic}.
In present methods, essential drug-like properties of cyclic peptides, including membrane permeability, solubility, protease resistance, and immunogenicity, are largely overlooked, posing additional challenges for therapeutic peptide design.

To tackle the above challenges jointly, we propose \textbf{APCyc},
a generative framework that enables \textbf{A}utomated generation of cyclization linkage types and sites, as well as \textbf{P}roperty-informed optimization for \textbf{Cyc}lic peptide design.
Unlike existing approaches that rely on fixed cyclization strategies
\citep{zhou2025harmonicsde,jiang2025cpcomposer},
APCyc explicitly formulates cyclization topology selection as a discrete, target-conditioned decision-making problem.
This design enables the automated generation of cyclization types and residue-level linkage sites conditioned on the receptor pocket context.
Specifically, APCyc decomposes cyclization topology into residue-level participation signals and pairwise linkage signals, allowing the model to explicitly learn cyclized residues, linkage sites, and linkage types during training.
We first extend the peptide residue vocabulary by explicitly distinguishing residues involved in cyclization from non-cyclized residues, even when they share the same amino-acid identity.
These cyclization-aware embeddings are then incorporated into the denoising network of a latent diffusion model \citep{kong2024full}, enabling supervised cyclization signals to condition and guide the denoising trajectory.
For property-informed optimization, we train energy-based surrogates in latent space,
which guide the diffusion process via gradients toward joint multi-property objectives.  
Figure~\ref{fig:teaser} summarizes the high-level design concept of APCyc.
We summarize our contributions as follows:
\begin{itemize}
    \item \textbf{Automated pocket-conditioned cyclization}
    Unlike prior methods that require predefined cyclization types or linkage sites, APCyc learns to select both directly from the target binding-pocket context.
    \item \textbf{Property-informed controllable generation}
    APCyc uses Bayesian posterior guidance to steer cyclic peptide generation across affinity, permeability, protease resistance, solubility, and immunogenicity, achieving the best permeability proxy ($0.107$) and protease-resistance score ($-1.474$).
    \item \textbf{End-to-end cyclic peptide design}
    APCyc unifies cyclization selection, all-atom generation, and property-informed guidance. Across guidance settings, APCyc achieves leading structural quality, including the best Rosetta total score ($-758.545$) and consistency ($0.971$), while maintaining strong binding affinity.
\end{itemize}

\begin{figure}[t]
    \centering
    \includegraphics[width=0.96\columnwidth]{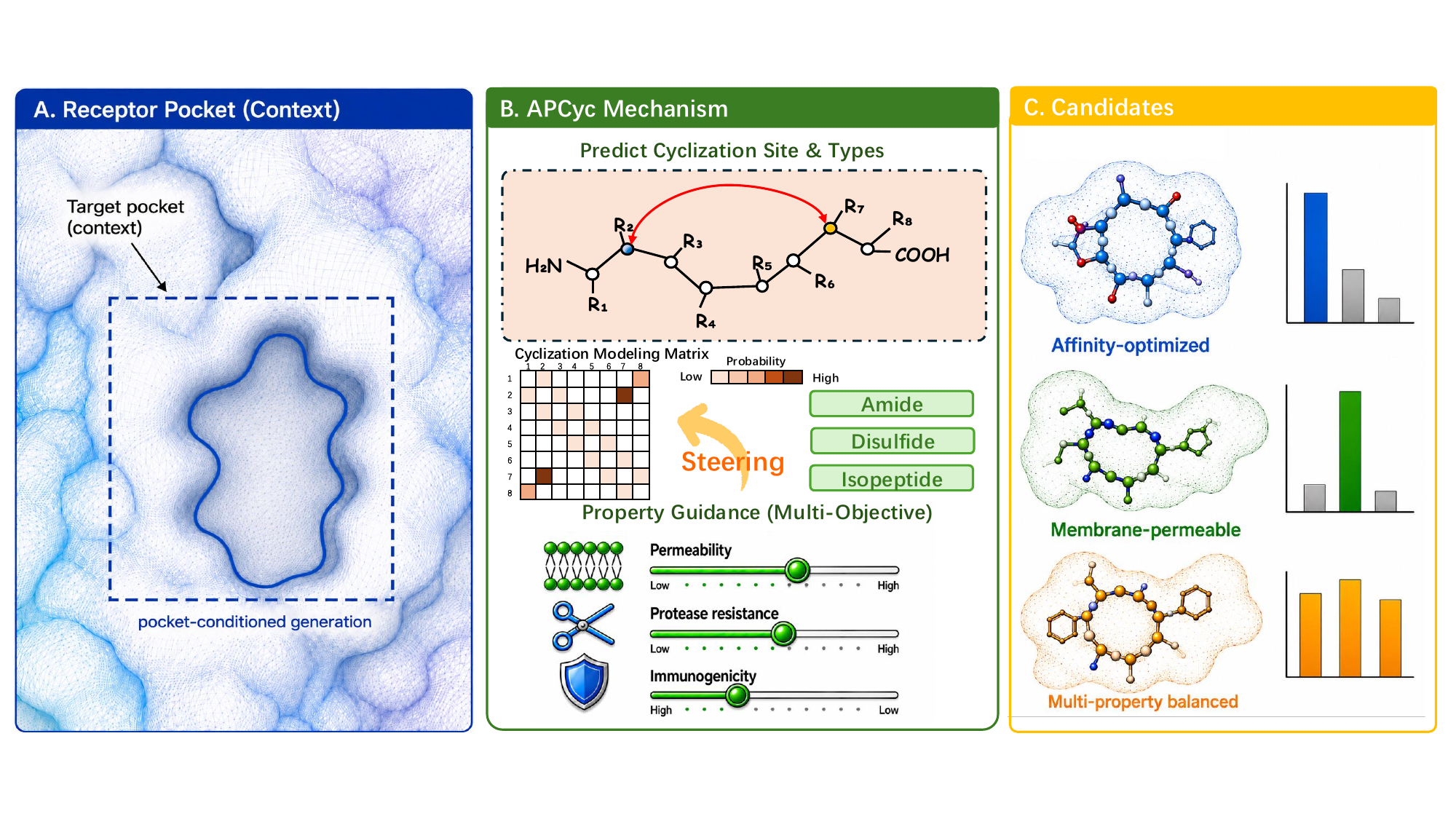}
    \caption{\textbf{APCyc Concept.}
    \textbf{(A)} The receptor pocket provides the structural context for target-conditioned cyclic peptide generation.
    \textbf{(B)} APCyc explicitly models cyclization as a discrete decision process by predicting both cyclization sites and linkage types. A cyclization modeling matrix encodes pairwise site probabilities, while chemically distinct linkage modes, including amide, disulfide, and isopeptide bonds, are incorporated into the generation process. Multi-objective property guidance further steers sampling toward desirable therapeutic profiles, such as membrane permeability, protease resistance, and low immunogenicity.
    \textbf{(C)} APCyc generates diverse candidate cyclic peptides under different design property objectives.}
    \Description{Teaser illustration showing receptor pocket context, APCyc cyclization and property guidance mechanisms, and candidate cyclic peptides optimized for affinity, membrane permeability, and multi-property balance.}
    \label{fig:teaser}
    \vspace{0.7em}
\end{figure}

\section{Related Work}
\subsection{Target-aware cyclic peptide design} 
Traditional cyclic peptide discovery has relied largely on experimental screening methods, such as phage display technologies 
\citep{heinis2009phage}, 
and rational macrocyclization guided by medicinal-chemistry heuristics 
\citep{driggers2008exploration,zorzi2017cyclic,deyle2017phage}. 
However, these approaches are labor-intensive and offer limited ability to jointly explore sequence, structure, and cyclization-topology spaces
\citep{newman2016natural}.
With the rise of deep learning and advances in protein structure prediction models such as AlphaFold2
\citep{jumper2021highly},
computational methods for cyclic peptide design have begun to emerge. 
AfCycDesign \citep{rettie2025alphafold2cyclic} adapts AlphaFold2 to cyclic peptides by encoding N-to-C terminal cyclization through modified positional encodings, enabling structure prediction, sequence redesign, and \textit{de novo} hallucination of cyclic peptides.  
In parallel, diffusion-based generative models built on equivariant architectures \citep{satorras2021n} have improved geometric modeling for cyclic peptide generation.
RFpeptides \citep{rettie2025macrocycle} further extends deep generative design to protein-binding macrocycles through a denoising diffusion-based pipeline.
DiffPepBuilder \citep{wang2024diffpepbuilder}
applies SE(3)-equivariant diffusion, with cyclization handled by post hoc disulfide bonding. Furthermore, CPSDE
\citep{zhou2025harmonicsde} 
utilizes chemical graph-based harmonic SDEs with atom–bond modeling, while CP-Composer
\citep{jiang2025cpcomposer}
performs cyclic peptide design through geometric constraint composition.
While effective, these methods are mainly trained on linear peptides and usually require cyclization constraints to be fixed a priori.
In contrast, APCyc treats cyclization topology as a target-conditioned design variable, allowing it to infer cyclization types and linkage sites from the receptor pocket context.

\subsection{Gradient-guided Bayesian diffusion}
Diffusion models support controllable generation by modifying the reverse denoising process through guidance signals. 
Early work introduced classifier guidance \citep{CG}, 
which steers sampling using gradients from a separately trained classifier that predicts the desired condition or attribute.
\citet{CFG} proposed classifier-free guidance, which enables conditional generation by combining conditional and unconditional score estimates without requiring an explicit classifier.
Beyond categorical conditioning, guidance has been extended to energy-based formulations, where surrogate model gradients guide sampling toward desired objectives under Bayesian posterior interpretation \citep{energy_guide,posteriorsampling}. Further extensions have explored multi-constraint guidance, where multiple objectives or constraints are combined through score composition.
For example, \citet{bansal2023universal} proposed universal guidance, which controls diffusion sampling using arbitrary guidance functions or off-the-shelf auxiliary networks, while MolJO \citep{qiu2024empower} applies gradient-based guidance to joint structure-based molecular optimization.
Our work utilizes regressor-based gradient guidance, combining gradients from multiple physicochemical property surrogates as posterior guidance terms for joint optimization of drug-relevant properties.

\begin{figure*}[t]
    \centering
    \includegraphics[width=0.94\textwidth]{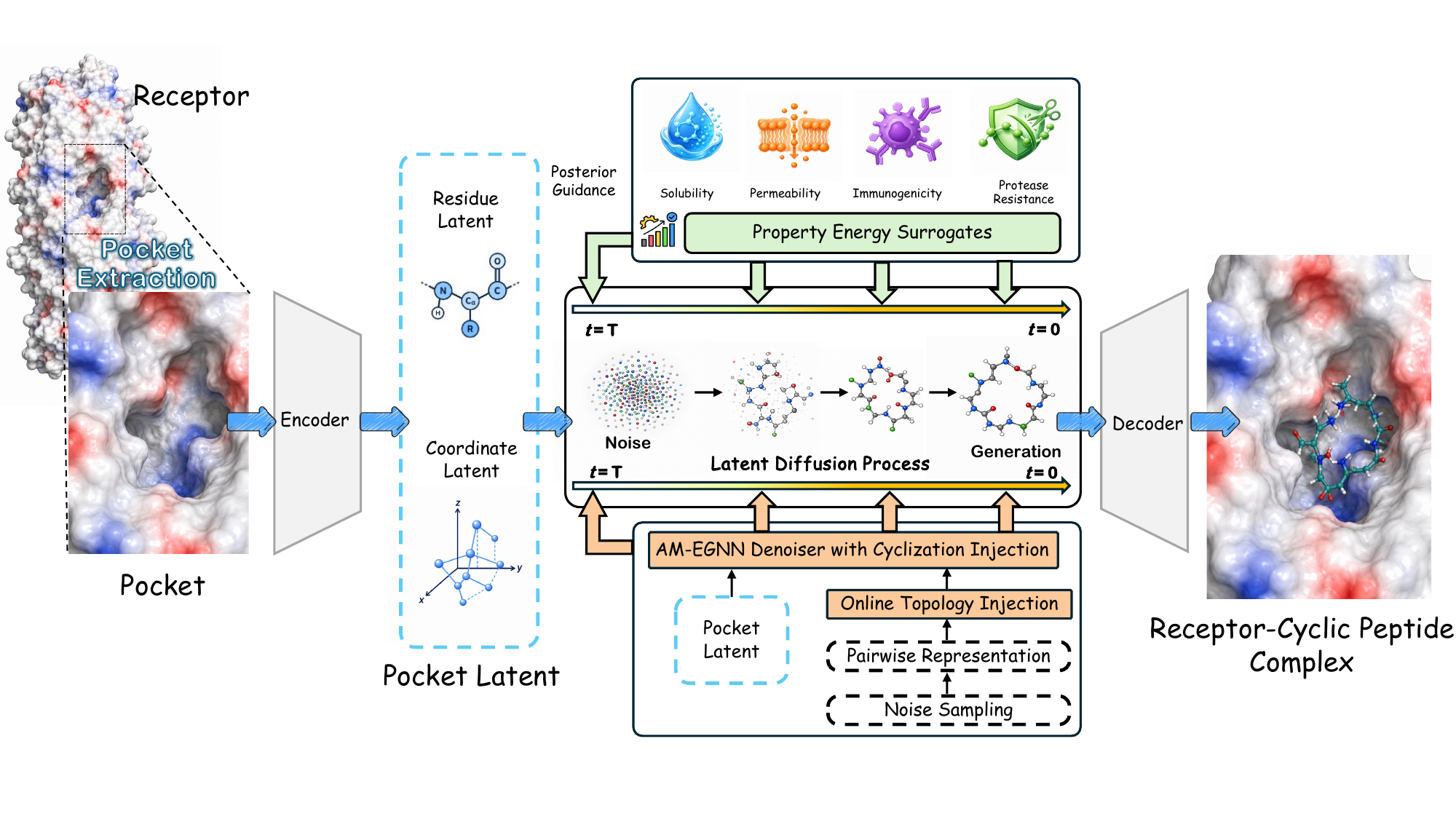}
    \vspace{0.25em}
    \caption{
    \textbf{Sampling workflow of APCyc for pocket-conditioned cyclic peptide design.}
    APCyc extracts a local receptor pocket from the input receptor structure and uses it as a fixed condition.
    Cyclic peptide variables, including residue identities, 3D coordinates, and cyclization topology, are encoded into residue and coordinate latents.
    A latent diffusion process with an AM-EGNN denoiser performs reverse sampling under the fixed pocket condition, with topology injection and energy guidance jointly steering cyclic peptide generation.
    The decoder reconstructs the final cyclic peptide ligand in the fixed receptor pocket.
    }
    \Description{Overview diagram of APCyc encoding receptor-peptide complexes, denoising latent variables with topology injection and posterior guidance, and decoding full-atom cyclic peptides.}
    \label{overview}
    \vspace{0.55em}
\end{figure*}

\section{Method}
In this section, we present \textbf{APCyc},
a joint sequence–structure latent diffusion framework that enables automated cyclization-topology modeling and property-informed generation for cyclic peptides.
Section~\ref{sec:preliminaries} introduces the notation and formalizes target-conditioned cyclic peptide generation, then presents the joint sequence--structure latent diffusion formulation.
Building on this formulation, Section~\ref{sec:cyc_injection} introduces \textbf{Automated Cyclization Pair Injection}, which injects cyclization-aware pair representations into the denoising network, enabling cyclization types and linkage sites to be inferred from the target context during generation.
We then give the diffusion process in detail in Section~\ref{sec:diffusion}.
To promote desired drug-relevant property profiles, Section~\ref{sec:guidance} presents \textbf{Bayesian Posterior Guidance}, which combines gradients from property surrogate models to steer the diffusion process toward joint multi-property objectives.
Finally, Section~\ref{sec:training} describes the training objectives and the complete inference algorithm. The overview of our method is shown in Figure~\ref{overview}.

\subsection{Preliminaries}
\label{sec:preliminaries}

\textbf{Geometric Graph Representation}
We represent the protein--peptide complex as a residue-level geometric graph 
$\mathcal{G} = (\mathcal{N}, \mathcal{E})$, 
where $\mathcal{N}$ denotes the set of residue nodes from both the receptor and the peptide, 
and $\mathcal{E}$ denotes edges encoding sequential, spatial, and covalent relationships between residues.
Each node $i \in \mathcal{N}$ is associated with a tuple 
$\mathbf{n}_i = (s_i, \mathbf{X}_i)$, 
where $s_i$ denotes the residue token, 
$\mathbf{X}_i \in \mathbb{R}^{K \times 3}$ stores padded full-atom coordinates.
Here, $K$ is the maximum number of atoms per residue.


Following previous work
\citep{kong2024full,lin2024ppflow}, 
we decompose the complex into a receptor graph $\mathcal{R}$ and a peptide graph $\mathcal{P}$.
The receptor context $\mathcal{C} \subseteq \mathcal{R}$ is defined as the binding-pocket residues within 10~\AA{} of the reference peptide during training, and is treated as a fixed conditioning input during generation.
We represent the cyclic peptide as 
$\mathcal{P} = (\mathbf{s}, \mathbf{X})$, 
where $\mathbf{s}$ denotes the peptide sequence, $\mathbf{X}$ denotes the full-atom peptide structure. 

In addition to sequence and structure, we denote the desired drug-relevant property targets as 
$\mathbf{Y} = (y_1, \ldots, y_M) \in \mathbb{R}^M$, 
where each $y_m$ is a normalized scalar score for a physicochemical or developability-related property, such as membrane permeability, solubility, or protease resistance.
Given a fixed receptor context $\mathcal{C}$ and desired property targets $\mathbf{Y}$, conditional cyclic peptide generation aims to sample peptide graphs from
$p(\mathcal{P} \mid \mathcal{C}, \mathbf{Y})$, 
where sequence, structure, and cyclization linkage are generated jointly.


\noindent \textbf{Latent Diffusion Modeling}
Following prior work on antibody and linear peptide design
\citep{kong2023end,kong2024full},
we adopt a dyMEAN-style autoencoder to map peptide graphs into residue-level latent representations.
Specifically, an encoder $\mathcal{E}_\phi$ embeds the peptide graph into latent variables
$\mathcal{Z}=\{(z_i,\tilde z_i)\}_{i=1}^L$,
where $z_i$ is E(3)-invariant and $\tilde z_i$ is E(3)-equivariant.
We then learn a diffusion model over $\mathcal{Z}$ using an equivariant GNN denoiser,
and decode the final latents back to peptide sequence and structure via a decoder $\mathcal{D}_\xi$.

\noindent \textbf{Cyclization-Enhanced Vocabulary Expansion}
Motivated by explicit cyclization annotations in cyclic peptide datasets such as CPSea
\citep{yang2025cpsea}, 
we extend the standard residue alphabet $\mathcal{A}_{\mathrm{std}}$ to 
$\mathcal{A}_{\mathrm{ext}} = \mathcal{A}_{\mathrm{std}} \cup \mathcal{A}_{\mathrm{cyc}}$, 
where $\mathcal{A}_{\mathrm{cyc}}$ contains specialized tokens for residues participating in specific cyclization linkages, such as CYS\_SS for disulfide-linked cysteine and LYS\_ISO/ASP\_ISO for residues involved in isopeptide bonds.
This extension allows the model to distinguish residues involved in cyclization from their non-cyclized counterparts at the token level, even when they share the same standard amino-acid identity.

\subsection{Automated Cyclization Pair Injection}
\label{sec:cyc_injection}
At a discrete diffusion timestep $t \in \{0,\ldots,T\}$, the noisy state
$\mathcal{Z}_t = \{(z_{i,t}, \tilde{z}_{i,t})\}_{i=1}^{L}$
is sampled from the closed-form marginal of the DDPM-style latent diffusion
process defined in Sec.~\ref{sec:diffusion}:
\begin{equation}
    z_{i,t} = \alpha_t z_{i,0} + \sigma_t \epsilon_i,
    \quad
    \tilde{z}_{i,t} = \alpha_t \tilde{z}_{i,0} + \sigma_t \tilde{\epsilon}_i,
    \label{eq:ddpm_marginal_small_z}
\end{equation}
where $\epsilon_i \sim \mathcal{N}(\mathbf{0},\mathbf{I})$ is Gaussian noise
for the invariant latent, $\tilde{\epsilon}_i \sim \mathcal{N}(\mathbf{0},\mathbf{I})$
is isotropic Gaussian noise with the same shape as $\tilde{z}_{i,0}$, and
$\alpha_t$ and $\sigma_t$ are the cumulative signal and noise coefficients
satisfying $\sigma_t^2 = 1-\alpha_t^2$.

\noindent \textbf{Pair Representation Construction}
To facilitate cyclization inference, we construct a high-dimensional pairwise tensor $\mathbf{H}^{\text{pair}}_t \in \mathbb{R}^{L \times L \times d_p}$. We leverage both the invariant node latents $z_{i,t}$ and the equivariant coordinates $\tilde{z}_{i,t}$. The pairwise feature vector for residues $(i, j)$ is computed as:
\begin{equation}
    \mathbf{h}_{ij, t}^{\text{pair}} = \phi_{\text{enc}}\left( \text{Concat}\left(z_{i,t}, \, z_{j,t}, \, z_{i,t} \odot z_{j,t}, \, \psi(\|\tilde{z}_{i,t} - \tilde{z}_{j,t}\|) \right) \right)
    \label{eq:pair_rep}
\end{equation}
where $\odot$ denotes the Hadamard product and $\psi(\cdot)$ represents a sinusoidal RBF embedding \citep{rbf}. 
This representation captures both the semantic compatibility (via $z$) and geometric proximity (via $\tilde{z}$) required for bond formation. To disentangle geometric modeling from cyclization reasoning,
we project $\mathbf{h}_{ij, t}^{\text{pair}}$ into two task-specific subspaces:

\begin{equation}
    \mathbf{h}_{ij,t}^{\text{struct}}
    = \phi_{\text{struct}}(\mathbf{h}_{ij,t}^{\text{pair}}),
    \quad
    \mathbf{h}_{ij,t}^{\text{cyc}}
    = \phi_{\text{cyc}}(\mathbf{h}_{ij,t}^{\text{pair}}).
    \label{eq:projection_heads}
\end{equation}
where $\phi_{\text{struct}}$ and $\phi_{\text{cyc}}$ are separate projection heads (MLPs). The resulting $\mathbf{h}_{ij}^{\text{struct}}$ is utilized for geometric updates, while $\mathbf{h}_{ij}^{\text{cyc}}$ serves as the basis for predicting cyclization probability.

We then predict the cyclization type distribution $\mathbf{p}_{\tau,t}$ and the
linkage matrix $\mathbf{S}_t$ at timestep $t$. Chemical constraints, such as
Cys--Cys compatibility, are enforced through a validity mask
$\mathbf{M}_{\mathrm{valid}}\in\{0,1\}^{L\times L}$. Distinct from row-wise
attention, the linkage probabilities are computed by a global normalization over
all valid residue-pair entries. Specifically, the cyclization head predicts
pairwise site logits
$\mathbf{L}_{\mathrm{site},t}\in\mathbb{R}^{L\times L}$ from
$\mathbf{h}_{ij,t}^{\mathrm{cyc}}$. We first compute a masked, pre-symmetrized
probability matrix $\mathbf{P}_t$ as
\begin{equation}
    \mathrm{vec}(\mathbf{P}_t)
    =
    \mathrm{Softmax}
    \left(
    \mathrm{vec}
    \left(
    \mathbf{L}_{\mathrm{site},t}
    +
    \mathcal{M}(\mathbf{M}_{\mathrm{valid}})
    \right)
    \right),
    \label{eq:topo_softmax}
\end{equation}
where $\mathrm{vec}(\cdot)$ flattens an $L\times L$ matrix into a vector,
and the mask operator $\mathcal{M}$ is defined element-wise as
\begin{equation}
    \mathcal{M}(\mathbf{M}_{\mathrm{valid}})_{ij}
    =
    \begin{cases}
    0, & M_{\mathrm{valid},ij}=1,\\
    -\infty, & M_{\mathrm{valid},ij}=0.
    \end{cases}
\end{equation}
Thus invalid entries receive zero probability after the softmax. The final
undirected linkage matrix is obtained by symmetrizing $\mathbf{P}_t$:
\begin{equation}
    S_{ij,t}
    =
    \begin{cases}
    P_{ij,t}+P_{ji,t}, & i\neq j,\\
    0, & i=j.
    \end{cases}
    \label{eq:topo_sym}
\end{equation}

The cyclization type distribution is predicted as
\begin{equation}
    \mathbf{p}_{\tau,t}
    =
    \mathrm{Softmax}\left(
    \phi_{\mathrm{type}}\left(
    \mathrm{Pool}\left(\{\mathbf{h}_{ij,t}^{\mathrm{cyc}}\}_{i<j}\right)
    \right)
    \right).
\end{equation}

\noindent \textbf{Adaptive Topology Injection via Edge Message Modulation}
Given the predicted cyclization type distribution $\mathbf{p}_{\tau,t}$ and the
symmetric linkage matrix $\mathbf{S}_t$, we inject the online topology signal
into the AM-EGNN denoising network through edge-feature augmentation and
message modulation. The injection strength is controlled by timestep-dependent
schedules $\lambda_{\mathrm{feat}}(t)$, $\lambda_{\mathrm{bias}}(t)$, and
$\lambda_{\mathrm{gate}}(t)$. In our implementation, these schedules are linear
functions of the diffusion timestep and are used to gradually adjust the
topology-injection strength during denoising.

Since $\mathbf{S}_t$ has already been symmetrized in
Eq.~\eqref{eq:topo_sym}, we directly use $S_{ij,t}$ as the undirected linkage
probability between residues $i$ and $j$. We first fuse the linkage probability
and the cyclization type prediction into the edge features:
\begin{equation}
    \mathbf{e}_{ij,t}^{\mathrm{inj}}
    =
    \phi_{\mathrm{edge}}
    \left(
    \mathrm{Concat}
    \left(
    \mathbf{e}_{ij,t}^{\mathrm{base}},
    \mathbf{p}_{\tau,t},
    \lambda_{\mathrm{feat}}(t) S_{ij,t}
    \right)
    \right),
    \label{eq:edge_inj}
\end{equation}
where $\mathbf{e}_{ij,t}^{\mathrm{base}}$ denotes the base geometric RBF edge
feature at timestep $t$, and $\lambda_{\mathrm{feat}}(t) S_{ij,t}$ is treated
as a scalar topology feature.

To further modulate message passing, we compute an additive bias and a
multiplicative gate from the structural pair representation and the injected
topology signal:
\begin{align}
    \mathbf{b}_{ij,t}
    &=
    \phi_{\mathrm{bias}}
    \left(
    \mathbf{h}_{ij,t}^{\mathrm{struct}}
    \right)
    +
    \lambda_{\mathrm{bias}}(t) S_{ij,t}\mathbf{1}_{d_m},
    \\
    \mathbf{g}_{ij,t}
    &=
    \sigma
    \left(
    \phi_{\mathrm{gate}}
    \left(
    \mathbf{h}_{ij,t}^{\mathrm{struct}}
    \right)
    \right)
    \odot
    \left[
    \sigma
    \left(
    \lambda_{\mathrm{gate}}(t) S_{ij,t}
    \right)
    \mathbf{1}_{d_m}
    \right],
    \label{eq:gate_inj}
\end{align}
where $\mathbf{1}_{d_m}$ is an all-one vector with the same dimension as the
edge message, so that the scalar linkage probability is broadcast to the
message dimension. Here $\sigma(\cdot)$ denotes the sigmoid function and
$\odot$ denotes element-wise multiplication.

Within each AM-EGNN layer, let $\mathbf{m}_{ij,t}^{\mathrm{base}}$ denote the
base message from node $j$ to node $i$ computed using the injected edge feature
$\mathbf{e}_{ij,t}^{\mathrm{inj}}$. The topology-aware message is then defined
as
\begin{equation}
    \mathbf{m}_{ij,t}^{\mathrm{mod}}
    =
    \left(
    \mathbf{m}_{ij,t}^{\mathrm{base}}
    +
    \mathbf{b}_{ij,t}
    \right)
    \odot
    \mathbf{g}_{ij,t}.
    \label{eq:message_mod}
\end{equation}
This modulation allows the predicted cyclization topology to guide denoising
while preserving the flexibility of the AM-EGNN to refine local geometry through
the original geometric edge features.

\begin{proposition}[SE(3)-Equivariance]
\label{prop:equivariance}
Let $F_\theta$ denote the denoising update with topology injection.
Then $F_\theta$ is SE(3)-equivariant, i.e.,
\[
F_\theta(g \cdot \mathcal{Z}_t) = g \cdot F_\theta(\mathcal{Z}_t),
\quad \forall g \in \mathrm{SE}(3).
\]
\end{proposition}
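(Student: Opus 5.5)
Our plan is to argue by composition. Fix $g=(R,\mathbf{b})\in\mathrm{SE}(3)$ acting on $\mathcal{Z}_t$ by $g\cdot(z_{i,t},\tilde z_{i,t})=(z_{i,t},R\tilde z_{i,t}+\mathbf{b})$, leaving invariant latents unchanged. The receptor context $\mathcal{C}$ is transformed by the same $g$, since it enters the graph as coordinates. We then show that every intermediate quantity in the topology-injection pipeline is either SE(3)-invariant or transforms covariantly. From this, the claim reduces to the known equivariance of the underlying AM-EGNN layer (inherited from \citep{kong2024full,satorras2021n}), which we take as given.

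\textbf{Step 1: the pair representation is invariant.} In Eq.~\eqref{eq:pair_rep} the inputs $z_{i,t}$, $z_{j,t}$ and $z_{i,t}\odot z_{j,t}$ are unchanged by $g$. The distance term also survives, because $\|R\tilde z_{i,t}+\mathbf{b}-R\tilde z_{j,t}-\mathbf{b}\|=\|\tilde z_{i,t}-\tilde z_{j,t}\|$ by orthogonality of $R$. Hence $\mathbf{h}^{\mathrm{pair}}_{ij,t}$ is invariant. Since $\phi_{\mathrm{struct}}$ and $\phi_{\mathrm{cyc}}$ in Eq.~\eqref{eq:projection_heads} are applied to invariant inputs, their outputs are invariant too.

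\textbf{Step 2: the topology signals are invariant.} The site logits $\mathbf{L}_{\mathrm{site},t}$ are functions of $\mathbf{h}^{\mathrm{cyc}}$. The validity mask $\mathbf{M}_{\mathrm{valid}}$ depends only on residue identities and chemistry, not on coordinates. Therefore $\mathbf{P}_t$ in Eq.~\eqref{eq:topo_softmax} and $\mathbf{S}_t$ in Eq.~\eqref{eq:topo_sym} are invariant. Pooling over invariant pair features, followed by $\phi_{\mathrm{type}}$ and a softmax, makes $\mathbf{p}_{\tau,t}$ invariant. The schedules $\lambda_{\cdot}(t)$ depend only on $t$. Consequently $\mathbf{e}^{\mathrm{inj}}_{ij,t}$ in Eq.~\eqref{eq:edge_inj} is invariant: its base term consists of RBF embeddings of interatomic distances and is invariant by the same argument as in Step~1. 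For the same reason, $\mathbf{b}_{ij,t}$ and $\mathbf{g}_{ij,t}$ in Eq.~\eqref{eq:gate_inj} are invariant.

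\textbf{Step 3: propagate through the AM-EGNN layer.} Write the layer in EGNN form. The base message $\mathbf{m}^{\mathrm{base}}_{ij,t}$ is computed from invariant node features, relative distances and the invariant $\mathbf{e}^{\mathrm{inj}}_{ij,t}$, so it is invariant. The modulated message in Eq.~\eqref{eq:message_mod} is then also invariant, because it combines invariant quantities by addition and Hadamard product. The invariant-feature update aggregates invariant messages, so it is invariant. The coordinate update has the form $\tilde z_i'=\tilde z_i+\sum_j(\tilde z_i-\tilde z_j)\,\varphi(\mathbf{m}^{\mathrm{mod}}_{ij,t})$ (or its multi-channel analogue). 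Relative vectors $\tilde z_i-\tilde z_j$ are translation invariant and rotate by $R$, so $\tilde z_i'\mapsto R\tilde z_i'+\mathbf{b}$. Induction over layers then gives $F_\theta(g\cdot\mathcal{Z}_t)=g\cdot F_\theta(\mathcal{Z}_t)$. If the network outputs a noise prediction $\hat{\tilde\epsilon}$ instead of updated coordinates, that output is a sum of relative vectors and so is rotation-equivariant and translation-invariant. The DDPM update applied to it is affine with coefficients summing appropriately, so equivariance is preserved once the centre of mass is fixed.

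\textbf{Main obstacle.} The delicate point is Step~3's dependence on the precise form of the AM-EGNN coordinate update: any coordinate-dependent gating must enter only through $\mathbf{m}^{\mathrm{mod}}$, never multiplying absolute positions. A related issue is translation equivariance under DDPM noise. The noisy marginal $\alpha_t\tilde z_0+\sigma_t\tilde\epsilon$ is not translation-equivariant unless coordinates are centred at the pocket or peptide centre of mass. I would handle this by first stating the usual zero-centre-of-mass convention, under which translations act trivially. Then I would verify the rotation case, where isotropic Gaussian noise is invariant under $R$.
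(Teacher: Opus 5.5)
Your proposal is correct and follows essentially the same route as the paper's proof. Both arguments first show that the pair representation and every topology-injection term ($\mathbf{S}_t$, $\mathbf{p}_{\tau,t}$, $\mathbf{e}^{\mathrm{inj}}_{ij,t}$, $\mathbf{b}_{ij,t}$, $\mathbf{g}_{ij,t}$) are invariant, and then use the EGNN-form coordinate update, which applies invariant scalar weights to the relative vectors $\tilde z_{i,t}-\tilde z_{j,t}$.

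Your extra remarks go beyond what the paper checks, and they are sound. The receptor context $\mathcal{C}$ must be transformed jointly with the peptide. For the full DDPM step, translations are handled by the zero-centre-of-mass convention, not by the coefficients: the coefficient on $\mathcal{Z}_t$ is $1/\sqrt{\eta_t}\neq 1$, so your phrase ``coefficients summing appropriately'' is inaccurate even though the centring caveat makes the argument work. Rotation equivariance of that step holds in distribution because $\boldsymbol{\xi}$ is isotropic.
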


During the late denoising phase, we switch to a deterministic strategy by forming $\mathbf{S}^{\mathrm{inj}}_t$ from the discretized linkage matrix $\mathbf{S}_t$. This "hard" linkage is treated as an explicit covalent edge, enhancing precise geometric closure in the final structure. Figure~\ref{fig:topology_injection} illustrates the online topology injection module.

\begin{figure}[t]
    \centering
    \vspace{-0.2em}
    \includegraphics[width=\columnwidth]{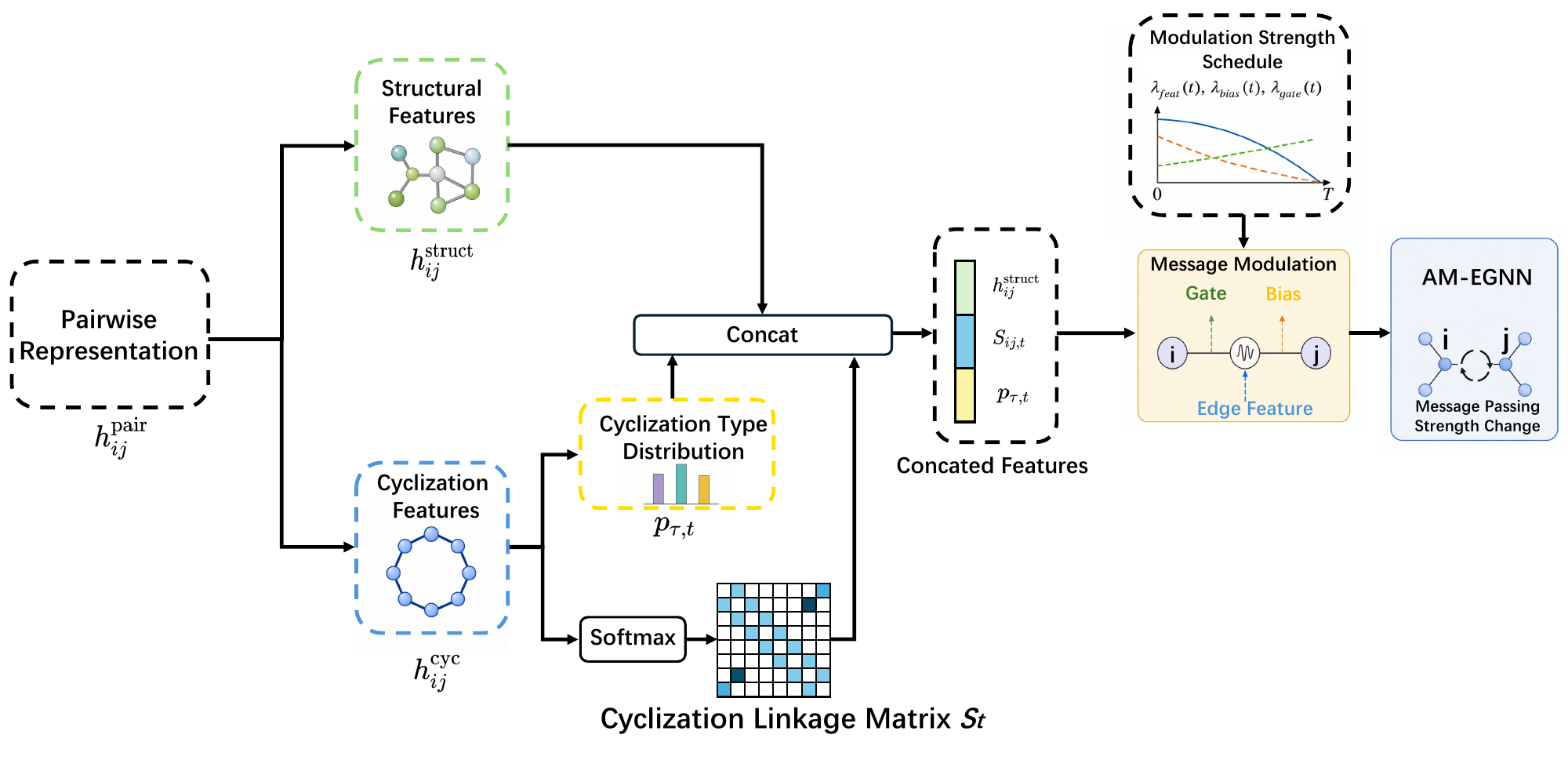}
    \caption{\textbf{Cyclization topology information injection.}
    APCyc predicts cyclization linkages and uses the topology signal to modulate AM-EGNN message passing.}
    \Description{Diagram of online topology injection where pairwise representations produce a linkage matrix and cyclization features that modulate message passing before the AM-EGNN update.}
    \label{fig:topology_injection}
    \vspace{-0.2em}
\end{figure}

\subsection{Geometry Latent Diffusion Process}
\label{sec:diffusion}

Following \citet{kong2024full}, we model cyclic peptide generation with a
DDPM-style latent diffusion process over residue-level latent variables. The
encoder outputs
$\mathcal{Z}_0=\{(z_{i,0},\tilde{z}_{i,0})\}_{i=1}^{L}$
are treated as clean latents, where $z_{i,0}$ is the invariant residue latent
and $\tilde{z}_{i,0}$ is the equivariant geometric latent.

Let $t\in\{0,\ldots,T\}$ be a discrete diffusion timestep. Given a
variance-preserving noise schedule $\{\beta_t\}_{t=1}^{T}$, we define
\[
    \eta_t=1-\beta_t,\quad
    \bar{\eta}_t=\prod_{s=1}^{t}\eta_s,\quad
    \alpha_t=\sqrt{\bar{\eta}_t},\quad
    \sigma_t=\sqrt{1-\bar{\eta}_t}.
\]
The forward perturbation admits the closed-form marginal
\begin{equation}
    z_{i,t}=\alpha_t z_{i,0}+\sigma_t\epsilon_i,
    \quad
    \tilde{z}_{i,t}
    =
    \alpha_t\tilde{z}_{i,0}
    +
    \sigma_t\tilde{\epsilon}_i,
    \label{eq:ddpm_componentwise_marginal}
\end{equation}
where $\epsilon_i$ and $\tilde{\epsilon}_i$ are independent standard Gaussian
noise terms with shapes matching $z_{i,0}$ and $\tilde{z}_{i,0}$, respectively.

The denoising network
$\boldsymbol{\epsilon}_{\theta}(\mathcal{Z}_t,\mathcal{C},t)$ predicts the
noise added in Eq.~\eqref{eq:ddpm_componentwise_marginal}. It is implemented as
the cyclization-aware AM-EGNN described in Sec.~\ref{sec:cyc_injection}. During
generation, we start from
$\mathcal{Z}_T\sim\mathcal{N}(\mathbf{0},\mathbf{I})$ and apply the standard
DDPM reverse update
\begin{equation}
    \mathcal{Z}_{t-1}
    =
    \frac{1}{\sqrt{\eta_t}}
    \left(
    \mathcal{Z}_t
    -
    \frac{\beta_t}{\sigma_t}
    \boldsymbol{\epsilon}_{\theta}(\mathcal{Z}_t,\mathcal{C},t)
    \right)
    +
    \sqrt{\tilde{\beta}_t}\boldsymbol{\xi},
    \quad
    \boldsymbol{\xi}\sim\mathcal{N}(\mathbf{0},\mathbf{I}),
    \label{eq:ddpm_sampling_step}
\end{equation}
where
$\tilde{\beta}_t=\frac{1-\bar{\eta}_{t-1}}{1-\bar{\eta}_t}\beta_t$, and
$\boldsymbol{\xi}$ is set to zero when $t=1$.

\textbf{Training Objective}
At training time, we uniformly sample a timestep $t \in \{1,\ldots,T\}$, construct $\mathcal{Z}_t$ using Eq.~\eqref{eq:ddpm_componentwise_marginal}, and optimize the DDPM noise-prediction loss
\begin{equation}
    \mathcal{L}_{\mathrm{diff}}
    =
    \mathbb{E}_{t,\mathcal{Z}_0,\boldsymbol{\epsilon}}
    \left[
    \left\|
    \boldsymbol{\epsilon}
    -
    \boldsymbol{\epsilon}_{\theta}(\mathcal{Z}_t,\mathcal{C},t)
    \right\|_2^2
    \right],
    \label{eq:diff_loss}
\end{equation}
where $\boldsymbol{\epsilon}=\{(\epsilon_i,\tilde{\epsilon}_i)\}_{i=1}^{L}$.

We further supervise the online topology prediction using the pre-symmetrized
linkage probability matrix $\mathbf{P}_t$ from Eq.~\eqref{eq:topo_softmax} and
the cyclization type distribution $\mathbf{p}_{\tau,t}$. Let
$\mathbf{S}^*$ denote the ground-truth linkage matrix and $\tau^*$ denote the
ground-truth cyclization type. We define
\[
    Y_{ij}^{*}
    =
    \frac{
    S_{ij}^{*}M_{\mathrm{valid},ij}
    }{
    \sum_{u,v}S_{uv}^{*}M_{\mathrm{valid},uv}
    },
\]
and use
\begin{equation}
    \mathcal{L}_{\mathrm{topo}}
    =
    \mathbb{E}_{t,\mathcal{Z}_0,\boldsymbol{\epsilon}}
    \left[
    -
    \sum_{i,j}
    Y_{ij}^{*}\log(P_{ij,t})
    +
    \lambda_{\mathrm{type}}
    \mathrm{CE}(\mathbf{p}_{\tau,t},\tau^*)
    \right].
    \label{eq:topo_loss}
\end{equation}
The final objective is
\begin{equation}
    \mathcal{L}_{\mathrm{total}}
    =
    \mathcal{L}_{\mathrm{diff}}
    +
    \lambda_{\mathrm{topo}}\mathcal{L}_{\mathrm{topo}}.
    \label{eq:total_loss}
\end{equation}

\subsection{Property Optimization via Bayesian Posterior Guidance}
\label{sec:guidance}

To steer generation toward peptide candidates satisfying desired therapeutic
criteria, such as binding affinity, permeability, solubility, or protease
resistance, we perform property-guided sampling in the latent space. Given the
target property vector
$\mathbf{Y}=(y_1,\ldots,y_M)\in\mathbb{R}^M$
and receptor context $\mathcal{C}$, the goal is to sample from the
property-conditioned posterior
$p_t(\mathcal{Z}_t\mid\mathcal{C},\mathbf{Y})$ rather than only from the
receptor-conditioned prior $p_t(\mathcal{Z}_t\mid\mathcal{C})$.

By Bayes' rule, the posterior score can be decomposed as
\begin{equation}
    \nabla_{\mathcal{Z}_t}
    \log p_t(\mathcal{Z}_t\mid\mathcal{C},\mathbf{Y})
    =
    \nabla_{\mathcal{Z}_t}
    \log p_t(\mathcal{Z}_t\mid\mathcal{C})
    +
    \nabla_{\mathcal{Z}_t}
    \log p_t(\mathbf{Y}\mid\mathcal{Z}_t,\mathcal{C}).
    \label{eq:posterior_score}
\end{equation}
The first term is represented by the DDPM denoising network through
$s_\theta(\mathcal{Z}_t,\mathcal{C},t)
\approx
-\boldsymbol{\epsilon}_{\theta}(\mathcal{Z}_t,\mathcal{C},t)/\sigma_t$.
The second term is intractable because drug-relevant properties are not directly
defined on noisy latent states. We therefore introduce a time-conditioned
differentiable surrogate $\Psi_{\psi}$ and define the property energy
\begin{equation}
    \mathcal{E}_{\psi}
    (\mathcal{Z}_t,t;\mathbf{Y},\mathcal{C})
    =
    \sum_{m=1}^{M}
    \lambda_m
    \ell_m
    \left(
    \hat{y}_{m,t},
    y_m
    \right),
    \quad
    \hat{\mathbf{Y}}_t
    =
    \Psi_{\psi}(\mathcal{Z}_t,\mathcal{C},t),
    \label{eq:property_energy}
\end{equation}
where $\hat{\mathbf{Y}}_t=(\hat{y}_{1,t},\ldots,\hat{y}_{M,t})$ is the predicted
property vector at timestep $t$, $\ell_m$ measures the deviation from the
$m$-th desired property target, and $\lambda_m$ controls its relative
importance. This energy serves as a differentiable proxy for the negative
log-likelihood
$-\log p_t(\mathbf{Y}\mid\mathcal{Z}_t,\mathcal{C})$. The resulting guided posterior is written as
\begin{equation}
    p_t(\mathcal{Z}_t\mid\mathcal{C},\mathbf{Y})
    \propto
    p_t(\mathcal{Z}_t\mid\mathcal{C})
    \exp
    \left(
    -\gamma_t
    \mathcal{E}_{\psi}
    (\mathcal{Z}_t,t;\mathbf{Y},\mathcal{C})
    \right),
    \label{eq:guided_posterior}
\end{equation}
where $\gamma_t$ is a timestep-dependent guidance scale. Therefore, the guided
score is approximated by
\begin{equation}
    s_{\theta,\psi}^{\mathrm{guide}}
    (\mathcal{Z}_t,\mathcal{C},\mathbf{Y},t)
    =
    -
    \frac{
    \boldsymbol{\epsilon}_{\theta}(\mathcal{Z}_t,\mathcal{C},t)
    }{\sigma_t}
    -
    \gamma_t
    \nabla_{\mathcal{Z}_t}
    \mathcal{E}_{\psi}
    (\mathcal{Z}_t,t;\mathbf{Y},\mathcal{C}).
    \label{eq:guided_score}
\end{equation}
Equivalently, under the DDPM noise-prediction parameterization, we use the
guided noise estimate
\begin{equation}
    \boldsymbol{\epsilon}_{\theta,\psi}^{\mathrm{guide}}
    =
    \boldsymbol{\epsilon}_{\theta}
    +
    \sigma_t\gamma_t
    \nabla_{\mathcal{Z}_t}
    \mathcal{E}_{\psi}
    (\mathcal{Z}_t,t;\mathbf{Y},\mathcal{C}).
    \label{eq:guided_noise}
\end{equation}
During sampling, we replace $\boldsymbol{\epsilon}_{\theta}$ with
$\boldsymbol{\epsilon}_{\theta,\psi}^{\mathrm{guide}}$ in the DDPM reverse
update:
\begin{equation}
    \mathcal{Z}_{t-1}
    =
    \frac{1}{\sqrt{\eta_t}}
    \left(
    \mathcal{Z}_t
    -
    \frac{\beta_t}{\sigma_t}
    \boldsymbol{\epsilon}_{\theta,\psi}^{\mathrm{guide}}
    \right)
    +
    \sqrt{\tilde{\beta}_t}\boldsymbol{\xi}.
    \label{eq:guided_ddpm_update}
\end{equation}
Thus, the energy gradient jointly guides the invariant residue latents
$\{z_{i,t}\}_{i=1}^{L}$ and the equivariant geometric latents
$\{\tilde{z}_{i,t}\}_{i=1}^{L}$, optimizing property with structural
refinement during generation.

\begin{figure}[t]
    \centering
    \includegraphics[width=\columnwidth]{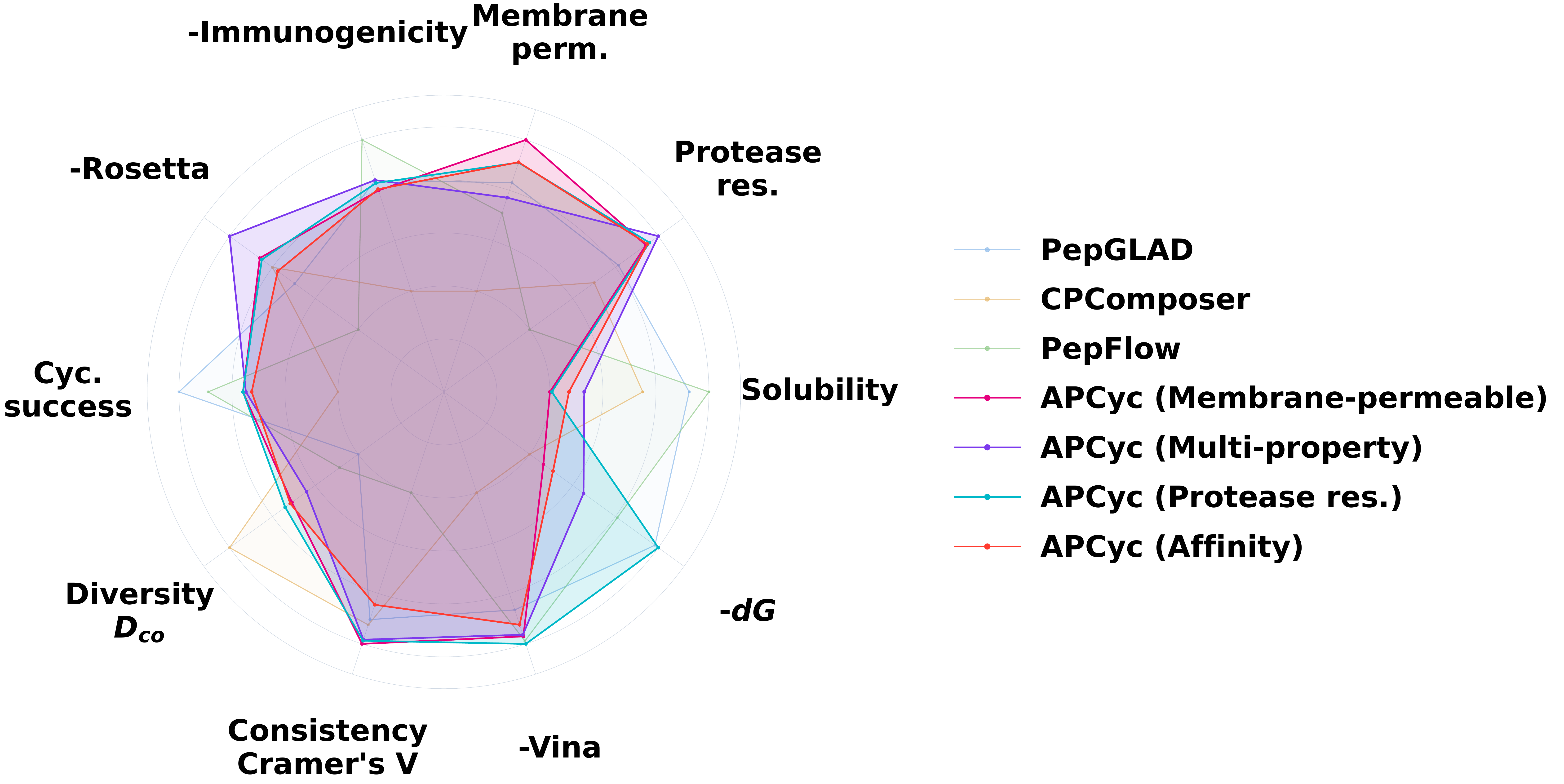}
    \caption{\textbf{Overall comparison across different methods on therapeutic properties and generative metrics.}}
    \Description{Radar chart comparing PepGLAD, CPComposer, PepFlow, and APCyc guidance variants across solubility, protease resistance, membrane permeability, immunogenicity, binding affinity, cyclization success, diversity, and consistency metrics.}
    \label{fig:property_radar}
    \vspace{-0.25em}
\end{figure}

\begin{algorithm}[t]
\caption{APCyc Inference with Topology and Property Guidance}
\label{alg:inference}
\begin{algorithmic}[1]
\STATE \textbf{Input:}
Denoiser $\boldsymbol{\epsilon}_{\theta}$, decoder $\mathcal{D}_{\xi}$,
receptor context $\mathcal{C}$, property targets
$\mathbf{Y}=(y_1,\ldots,y_M)$, property surrogate $\Psi_{\psi}$,
base weights $\mathbf{w}=\{w_m\}_{m=1}^{M}$, validity mask
$\mathbf{M}_{\mathrm{valid}}$, aggregation operator $\mathrm{Agg}(\cdot)$,
norm operator $\mathrm{Norm}(\cdot)$, guidance scale $\gamma_t$, hard-linkage
threshold $t_{\mathrm{hard}}$

\STATE \textbf{Initialize:}
Sample
$\mathcal{Z}_T=\{(z_{i,T},\tilde{z}_{i,T})\}_{i=1}^{L}
\sim \mathcal{N}(\mathbf{0},\mathbf{I})$

\FOR{$t=T,\ldots,1$}

    \STATE
    $\mathbf{p}_{\tau,t},\mathbf{P}_t,\mathbf{S}_t
    \gets
    \mathrm{CyclizationHeads}
    (\mathcal{Z}_t,\mathcal{C},t;\mathbf{M}_{\mathrm{valid}})$

    \STATE
    $\mathbf{S}^{\mathrm{inj}}_t
    \gets
    \begin{cases}
        \mathrm{SymOneHot}
        \left(
        \arg\max_{i<j:\,M_{\mathrm{valid},ij}=1}
        S_{ij,t}
        \right),
        & t\le t_{\mathrm{hard}},\\
        \mathbf{S}_t,
        & t>t_{\mathrm{hard}}.
    \end{cases}$

    \STATE
    $\hat{\mathbf{Y}}_t
    \gets
    \Psi_{\psi}(\mathcal{Z}_t,\mathcal{C},t)$

    \FOR{$m=1,\ldots,M$}
        \STATE
        $\hat{E}_{m,t}
        \gets
        \ell_m(\hat{y}_{m,t},y_m)$
        \STATE
        $\mathbf{g}_{m,t}
        \gets
        \nabla_{\mathcal{Z}_t}\hat{E}_{m,t}$
        \STATE
        $N_{m,t}
        \gets
        \mathrm{Norm}(\mathbf{g}_{m,t})$
    \ENDFOR

    \STATE
    $\lambda_m(t)
    \gets
    \mathrm{stopgrad}
    \left(
    w_m
    \cdot
    \frac{
    \mathrm{Agg}(\{N_{j,t}\}_{j=1}^{M})
    }{
    N_{m,t}+\varepsilon_{\mathrm{bal}}
    }
    \right),
    \quad m=1,\ldots,M$

    \STATE
    $\hat{E}_{\mathrm{bal}}(\mathcal{Z}_t,t)
    \gets
    \sum_{m=1}^{M}
    \lambda_m(t)\hat{E}_{m,t}$

    \STATE
    $\mathbf{g}_{\mathrm{prop},t}
    \gets
    \nabla_{\mathcal{Z}_t}
    \hat{E}_{\mathrm{bal}}(\mathcal{Z}_t,t)$

    \STATE
    $\hat{\boldsymbol{\epsilon}}_{\theta}
    \gets
    \boldsymbol{\epsilon}_{\theta}
    (\mathcal{Z}_t,\mathcal{C},t;
    \mathbf{S}^{\mathrm{inj}}_t,\mathbf{p}_{\tau,t})$

    \STATE
    $\hat{\boldsymbol{\epsilon}}_{\theta,\psi}^{\mathrm{guide}}
    \gets
    \hat{\boldsymbol{\epsilon}}_{\theta}
    +
    \gamma_t\sigma_t
    \mathbf{g}_{\mathrm{prop},t}$

    \STATE
    $\mathcal{Z}_{t-1}
    \gets
    \mathrm{DDPMReverseStep}
    \left(
    \mathcal{Z}_t,
    \hat{\boldsymbol{\epsilon}}_{\theta,\psi}^{\mathrm{guide}},
    t
    \right)$

\ENDFOR

\STATE
$\mathbf{p}_{\tau,0},\mathbf{P}_0,\mathbf{S}_0
\gets
\mathrm{CyclizationHeads}
(\mathcal{Z}_0,\mathcal{C},0;\mathbf{M}_{\mathrm{valid}})$

\STATE
$\hat{\mathcal{P}}
\gets
\mathcal{D}_{\xi}(\mathcal{Z}_0)$

\STATE
\textbf{return}
$\hat{\mathcal{P}},
\arg\max \mathbf{p}_{\tau,0},
\mathbf{S}_0$

\end{algorithmic}
\end{algorithm}

\subsection{Training and Inference}
\label{sec:training}

APCyc is trained in a decoupled two-stage paradigm. We first train a joint
latent autoencoder to learn residue-level invariant and equivariant latents,
and then train a DDPM-style latent diffusion model over the frozen latent space.

\noindent \textbf{Stage 1: Joint Latent Autoencoding}
We train the encoder $\mathcal{E}_{\phi}$ and decoder $\mathcal{D}_{\xi}$ to
map a peptide graph $\mathcal{P}$ into clean residue-level latents
$\mathcal{Z}_0=\{(z_{i,0},\tilde{z}_{i,0})\}_{i=1}^{L}$ and reconstruct the
full-atom peptide. The autoencoding objective is
\begin{equation}
    \mathcal{L}_{\mathrm{AE}}
    =
    \frac{1}{L}
    \sum_{i=1}^{L}
    \left(
    \mathcal{L}_{\mathrm{rec}}(i)
    +
    \mathcal{L}_{\mathrm{latent}}(i)
    \right).
    \label{eq:ae_loss}
\end{equation}
The reconstruction term is
\begin{equation}
    \mathcal{L}_{\mathrm{rec}}(i)
    =
    \mathrm{CE}(s_i,\hat{s}_i)
    +
    \|\mathbf{X}_i-\hat{\mathbf{X}}_i\|_F^2
    +
    \gamma_{\mathrm{phys}}
    \mathcal{L}_{\mathrm{phys}}(i),
    \label{eq:rec_loss}
\end{equation}
where $s_i$ and $\hat{s}_i$ denote the ground-truth and reconstructed residue
types, $\mathbf{X}_i$ and $\hat{\mathbf{X}}_i$ denote the ground-truth and
reconstructed full-atom coordinates, and $\mathcal{L}_{\mathrm{phys}}$ enforces
valid local peptide geometry, such as bond lengths and bond angles. The latent
regularization term $\mathcal{L}_{\mathrm{latent}}$ regularizes the invariant
latent $z_{i,0}$ and the equivariant geometric latent $\tilde{z}_{i,0}$; the
detailed autoencoder objective is provided in Appendix~\ref{vae}.

\noindent \textbf{Stage 2: Latent Diffusion Training}
In the second stage, the autoencoder is frozen and the denoising network
$\boldsymbol{\epsilon}_{\theta}$ is optimized using the joint objective
$\mathcal{L}_{\mathrm{total}}$ in Sec.~\ref{sec:diffusion}. This stage uses the
DDPM noise-prediction objective over noisy latents $\mathcal{Z}_t$, together
with auxiliary topology supervision for the Online Topology Injection module in
Sec.~\ref{sec:cyc_injection}. Thus, the denoiser learns both latent geometry
denoising and timestep-dependent cyclization topology prediction.

\noindent \textbf{Inference with Adaptive Balanced Guidance}
For property-guided generation, we use the target property vector
$\mathbf{Y}=(y_1,\ldots,y_M)$ defined in Sec.~\ref{sec:preliminaries}. At each reverse
diffusion step $t$, the property surrogate $\Psi_{\psi}$ predicts
\begin{equation}
    \hat{\mathbf{Y}}_t
    =
    \Psi_{\psi}(\mathcal{Z}_t,\mathcal{C},t),
    \quad
    \hat{\mathbf{Y}}_t
    =
    (\hat{y}_{1,t},\ldots,\hat{y}_{M,t}).
\end{equation}
For each property target, we define a property-wise energy
\begin{equation}
    \hat{E}_{m,t}
    =
    \ell_m(\hat{y}_{m,t},y_m),
    \quad
    m=1,\ldots,M,
    \label{eq:property_wise_energy}
\end{equation}
where $\ell_m$ measures the deviation from the desired $m$-th property target.

To balance multiple objectives, we compute the gradient norm of each
property-specific energy:
\begin{equation}
    \mathbf{g}_{m,t}
    =
    \nabla_{\mathcal{Z}_t}\hat{E}_{m,t},
    \quad
    N_{m,t}
    =
    \mathrm{Norm}(\mathbf{g}_{m,t}).
    \label{eq:property_grad_norm}
\end{equation}
The adaptive weight for the $m$-th property is
\begin{equation}
    \lambda_m(t)
    =
    \mathrm{stopgrad}
    \left(
    w_m
    \cdot
    \frac{
    \mathrm{Agg}(\{N_{j,t}\}_{j=1}^{M})
    }{
    N_{m,t}+\varepsilon_{\mathrm{bal}}
    }
    \right),
    \label{eq:adaptive_weight}
\end{equation}
where $w_m$ is the base importance weight, $\mathrm{Agg}(\cdot)$ is an
aggregation operator over gradient norms, and $\varepsilon_{\mathrm{bal}}$ is a
small constant for numerical stability. The balanced property energy and its
guidance gradient are then
\begin{equation}
    \hat{E}_{\mathrm{bal}}(\mathcal{Z}_t,t)
    =
    \sum_{m=1}^{M}
    \lambda_m(t)\hat{E}_{m,t},
    \quad
    \mathbf{g}_{\mathrm{prop},t}
    =
    \nabla_{\mathcal{Z}_t}
    \hat{E}_{\mathrm{bal}}(\mathcal{Z}_t,t).
    \label{eq:balanced_energy}
\end{equation}
Following Sec.~\ref{sec:guidance}, we form the guided noise estimate as
\begin{equation}
    \boldsymbol{\epsilon}_{\theta,\psi}^{\mathrm{guide}}
    =
    \boldsymbol{\epsilon}_{\theta}
    +
    \gamma_t\sigma_t
    \mathbf{g}_{\mathrm{prop},t},
    \label{eq:balanced_guided_noise}
\end{equation}
where $\gamma_t$ is the timestep-dependent guidance scale. The guided noise is
then used in the DDPM reverse update in Eq.~\eqref{eq:guided_ddpm_update}.
The complete inference procedure is summarized in Algorithm~\ref{alg:inference}.

\begin{table*}[t]
\centering
\caption{Property metrics (mean over valid samples). Higher is better for Solubility, Protease resistance, and membrane permeability (Pproxy); lower is better for Immunogenicity. Top-1 and Top-2 are highlighted by \textbf{bold} and \underline{underline}, respectively.}
\label{tab:property_results}
\vspace{0.35em}
\resizebox{0.82\textwidth}{!}{%
\begin{tabular}{lcccc}
\toprule
\textbf{Method} & \textbf{Solubility} $\uparrow$ & \textbf{Protease res.} $\uparrow$ & \textbf{Membrane perm.} $\uparrow$ & \textbf{Immunogenicity} $\downarrow$ \\
\midrule

\rowcolor{gray!10}
\multicolumn{5}{l}{\textbf{Baselines}} \\
PepGLAD                & \underline{1.593} & -1.555 & -0.010 & -9.097 \\
CPComposer             & 1.520 & -1.604 & -0.307 & -7.576 \\
PepFlow                & \textbf{1.624} & -1.735 & -0.093 & \textbf{-9.654} \\

\midrule
\\[-0.6em]

\rowcolor{gray!10}
\multicolumn{5}{l}{\textbf{Ours}} \\
Membrane-permeable     & 1.374 & -1.499 & \textbf{0.107} & -8.960 \\
Multi-property (joint) & 1.428 & \textbf{-1.474} & -0.051 & \underline{-9.102} \\
Protease res.-optimized    & 1.377 & \underline{-1.492} & 0.045 & -9.062 \\
Affinity-optimized     & 1.404 & -1.496 & \underline{0.046} & -8.977 \\
\bottomrule
\end{tabular}
}
\end{table*}

\section{Experiment}
\subsection{Experimental Setup}
\textbf{Dataset} In this work, we utilize \textbf{CPCore}, a high-quality refined subset of the \textbf{CPSea} dataset \citep{yang2025cpsea}. CPSea is a large-scale collection of 2.71 million cyclic peptide--protein complexes curated from the AlphaFold Database (AFDB). \textbf{CPCore} is constructed through intersectional filtering to ensure biophysical fidelity and essential properties. To be specific, the dataset consists of \textbf{71,867} unique complexes, a scale suitable for end-to-end training of generative models from scratch. It covers diverse and classical cyclization topologies, including \textbf{mainchain} amide bonds, \textbf{disulfide} bonds, and side-chain \textbf{isopeptide} (specifically Lys-Asp/Asn) linkages.
Following \citet{yang2025cpsea}, we partition the dataset based on FoldSeek structural clusters to prevent data leakage. Of the 30,819 clusters in CPCore, we randomly reserve 616 clusters ($\sim$ 2\%)
for validation and use the remainder for training. For evaluation, we employ the Large Non-Redundant (LNR) dataset. After filtering for sequence identity ($<$40\% via MMseqs2) 
and structural suitability, the final test set comprises 56 unique targets. Concretely, after the cluster-based split, the CPCore training set contains 70,284 complexes and the validation set contains 1,583 complexes.

\noindent \textbf{Property Collection}
To facilitate multi-objective optimization and ensure therapeutic viability, we developed a comprehensive property evaluation pipeline. This pipeline integrates a diverse suite of computational tools to characterize the physicochemical, pharmacological, and binding profiles of the cyclic peptides. The following provides a detailed overview of the specialized property prediction tools, and we provide implementation details for property collection in Appendix~\ref{metrics}.
\begin{itemize}
    \item \textbf{Physicochemical Descriptors}: Hydrophobicity is assessed via the Grand Average of Hydropathy (GRAVY) using the Kyte-Doolittle scale \citep{kyte1982simple}. Additional rational descriptors, including partition coefficient (logP) \citep{wildman1999prediction}, topological polar surface area (TPSA) \citep{ertl2000fast}, and atom-normalized TPSA (rTPSA)
    \citep{ertl2000fast} are calculated via RDKit.
    \item \textbf{Solubility}: Intrinsic solubility is predicted via the CamSol method \citep{sormanni2015camsol}.
    \item \textbf{Protease resistance}: Proteolytic stability is assessed via ProsperousPlus
    \citep{li2023prosperousplus} to estimate the susceptibility of the cyclic backbone to enzymatic cleavage.
    \item \textbf{Immunogenicity}: Potential binding and presentation to MHC molecules are predicted using BigMHC \citep{albert2023deep}.
    \item \textbf{Binding Affinity}: Interface binding energetics are characterized by AutoDock Vina score \citep{trott2010autodock} and Rosetta interface energy (\textit{rosetta\_dG}) \citep{alford2017rosetta} using the interface\_analyze protocol.
\end{itemize}

\noindent \textbf{Baselines.} We compare our method with several representative state-of-the-art peptide generative models. PepGLAD \citep{kong2024full} performs target-aware full-atom peptide design using geometric latent diffusion with a variational autoencoder for joint sequence–structure generation. Building upon the PepGLAD architecture, CP-Composer \citep{jiang2025cpcomposer} enables zero-shot cyclic peptide generation by decomposing cyclization into composable geometric constraints. Since CP-Composer focuses on the transferability from linear to cyclic peptides, we directly employ its original linear peptide generation weights and apply head-to-tail cyclization constraints for our comparisons. Additionally, we evaluate PepFlow \citep{li2024full}, a multi-modal conditional flow-matching model that jointly models residue orientation, torsion angles, and sequence for full-atom co-design. To ensure a fair and rigorous comparison, PepGLAD and PepFlow were retrained from scratch on the CPCore dataset using identical data splits and training protocols, while CP-Composer was implemented in its intended zero-shot capacity.
\begin{table*}[t]
\centering
\caption{Main results (mean over valid samples). Top-1 and Top-2 are highlighted by \textbf{bold} and \underline{underline}, respectively.}
\label{tab:main_results}
\vspace{0.35em}
\resizebox{0.90\textwidth}{!}{%
\begin{tabular}{l c cc c c cc}
\toprule
\textbf{Method} &
\multicolumn{1}{c}{\textbf{Stability}} &
\multicolumn{2}{c}{\textbf{Success}} &
\multicolumn{1}{c}{\textbf{Diversity}} &
\multicolumn{1}{c}{\textbf{Consistency}} &
\multicolumn{2}{c}{\textbf{Affinity}} \\
\cmidrule(lr){2-2}\cmidrule(lr){3-4}\cmidrule(lr){5-5}\cmidrule(lr){6-6}\cmidrule(lr){7-8}
& \textbf{Rosetta} $\downarrow$ & \textbf{Cyc.} $\uparrow$ & \textbf{Energy} $\uparrow$ & \textbf{$D_{co}$} $\uparrow$ & \textbf{Cramér's V} $\uparrow$ & \textbf{Vina} $\downarrow$ & \textbf{$dG$} $\downarrow$ \\
\midrule

\rowcolor{gray!10}
\multicolumn{8}{l}{\textbf{Baselines}} \\
PepGLAD     & -735.528 & \textbf{0.973} & \textbf{0.883} & 0.669 & 0.948 & -4.544 & \underline{-12.140} \\
CPComposer  & -743.422 & 0.809 & 0.700 & \textbf{0.898} & 0.953 & -3.686 & -10.716 \\
PepFlow     & -713.165 & \underline{0.943} & \underline{0.711} & 0.702 & 0.828 & \underline{-4.772} & -11.711 \\

\midrule
\\[-0.6em]

\rowcolor{gray!10}
\multicolumn{8}{l}{\textbf{Ours}} \\
Membrane-permeable & \underline{-747.898} & 0.907 & 0.671 & 0.787 & \textbf{0.971} & -4.738 & -10.872 \\
Multi-property (joint) & \textbf{-758.545} & 0.904 & 0.664 & 0.761 & 0.967 & -4.726 & -11.328 \\
Protease res.-optimized & -747.176 & 0.907 & 0.689 & \underline{0.799} & \underline{0.968} & \textbf{-4.793} & \textbf{-12.178} \\
Affinity-optimized & -741.560 & 0.898 & 0.648 & 0.790 & 0.934 & -4.653 & -10.981 \\
\bottomrule
\end{tabular}
}
\end{table*}

\noindent \textbf{Metrics} To evaluate the efficacy of our method and the therapeutic potential of the designed cyclic peptides, we first assess their pharmacological viability and structural rationality. Following the property evaluation pipeline described previously, we calculate a suite of \textbf{drug-like properties}, including physicochemical descriptors (LogP, TPSA, GRAVY), intrinsic solubility, protease resistance and immunogenicity. We define \textbf{Affinity} using the AutoDock Vina score \citep{trott2010autodock}, which measures empirical docking fitness, and the Rosetta interface energy ($dG$) \citep{alford2017rosetta}, which quantifies the binding free energy between the peptide and its target.

Furthermore, we evaluate the generative performance using success rates and ensemble-level distributional metrics. 
A generated sample is considered \textbf{Cyclization Success} if the distance between the cyclization residues falls within the range of $[3.0, 8.0]$~\AA, 
while \textbf{Energy Success} is defined by a negative interface binding energy ($dG < 0$). 
In addition to these criteria, we adopt the Rosetta total score~\citep{chaudhury2010pyrosetta} as a measure of \textbf{Stability}, 
reflecting the physical plausibility and energetic favorability of the peptide’s three-dimensional binding conformation. 
To quantify \textbf{Diversity}, we utilize the co-diversity metric 
$D_{co} = \sqrt{D_{seq} \times D_{struct}}$ \citep{kong2024full}, 
where $D_{seq}$ and $D_{struct}$ denote the fractions of clusters over valid samples obtained via single-linkage hierarchical clustering 
based on sequence distance ($1 -$ similarity, threshold $0.4$) and $C_\alpha$ RMSD (threshold $4.0$~\AA), respectively. 
The correspondence between sequence and structure spaces is further evaluated by \textbf{Consistency}, measured via Cramér’s V~\citep{cramer1999mathematical}, 
which captures the correlation between sequence and structural cluster assignments. The detailed metrics are in Appendix~\ref{metrics}.

\subsection{Main Results}
For each receptor in test set, we generate $10$ candidate peptides. 
Samples that pass both the cyclization and energy criteria are regarded as \textbf{valid}. 
For each metric, we first compute the mean over valid samples within the same receptor, 
and then average across receptors to obtain the final reported performance.

Since our method enables demand-driven cyclic peptide design, we report several representative therapeutic design objectives, including affinity-optimized peptides, protease-resistant peptides, highly membrane-permeable peptides, and multi-property jointly optimized peptides. We provide the detailed guidance proportion in Appendix~\ref{detail}. As shown in Table~\ref{tab:property_results}, our method allows explicit control over multiple key physicochemical objectives, including solubility, protease resistance, membrane permeability, and immunogenicity.
Property-specific optimization successfully steers generated peptides toward the intended design targets. 
Figure~\ref{fig:property_radar} visualizes these normalized cross-metric trends, showing that APCyc variants occupy complementary regions of the therapeutic--generative trade-off space while improving their intended guidance targets.
For instance, the membrane-permeable setting achieves the highest permeability proxy among all methods while maintaining comparable solubility and immunogenicity, indicating that permeability enhancement does not substantially compromise other safety-related attributes. 
Meanwhile, the multi-property joint optimization yields the best protease resistance and competitive immunogenicity, demonstrating the capability of our framework to balance trade-offs across multiple therapeutic constraints rather than optimizing a single metric in isolation. 
Protease resistance- and affinity-oriented variants further preserve favorable physicochemical profiles, suggesting that targeted optimization remains compatible with overall molecular feasibility.

Notably, we also observe a mild decrease in solubility under guided optimization. This trend likely arises from the intrinsic physicochemical coupling between membrane permeability, hydrophobicity, and structural compactness, where promoting permeability oriented features can partially counteract solubility.
Such behavior is consistent with known trade-offs in peptide drug design and further highlights the necessity of controllable multi-objective optimization~\citep{ramelot2023cell}.
Together, these results verify that our framework enables controllable and therapeutically meaningful modulation of peptide properties beyond baseline generative behavior.

\begin{table}[t]
\centering
\caption{Ablation on protease resistance and immunogenicity.}
\label{tab:ablation_protease_immuno}
\begin{tabular}{lcc}
\toprule
\textbf{Method} & \textbf{Protease res.} $\uparrow$ & \textbf{Immunogenicity} $\downarrow$ \\
\midrule
Multi-property (guided) & \textbf{-1.474} & \textbf{-9.102} \\
Protease res.-optimized     & -1.492 &  -9.062 \\
Base (no guidance)      & -1.519 & -9.011 \\
\bottomrule
\end{tabular}
\end{table}

Beyond property-level improvements, Table~\ref{tab:main_results} shows that the generated peptides also retain strong performance under standard generative evaluation criteria. Specifically, our guided variants achieve consistently strong \textbf{Stability}, outperforming the baselines in most cases, with \textit{Affinity-optimized} being the only exception that is slightly weaker than CPComposer.
In addition, the \textit{Membrane-permeable} variant attains the highest \textbf{Consistency} score, indicating improved sequence--structure agreement under permeability-oriented guidance. 
Meanwhile, the \textit{Protease res.-optimized} variant achieves the best binding affinity among our models, yielding the most favorable \textbf{Vina} score and interface free energy ($dG$). 
Finally, our method maintains competitive \textbf{Diversity} across guidance settings, demonstrating that improved stability and affinity do not come at the expense of ensemble-level variation. This suggests that enhanced therapeutic property control remains compatible with generative quality and binding effectiveness.
Overall, the proposed framework achieves balanced improvements across both functional therapeutic objectives and conventional generative metrics, supporting its suitability for practical cyclic peptide design.

\subsection{Target-level Case Study}
\label{sec:case_study}

Beyond the aggregate benchmark results, we further conduct target-level case studies to examine whether APCyc can generate cyclic peptides with favorable binding and multi-property profiles on individual targets. Specifically, we select two representative targets, 3rc4 and 4xal, and compare APCyc-generated cyclic peptides with baseline references on the same targets.

As shown in Table~\ref{tab:case_study}, APCyc consistently achieves stronger binding scores and more favorable multi-property profiles than the PepFlow/PepGLAD mean on both targets. For Vina docking score and immunogenicity, where lower values are preferred, APCyc obtains substantially lower scores. For membrane permeability and protease resistance, where higher values are preferred, APCyc also achieves better or comparable results. In particular, APCyc improves membrane permeability from $0.806$ to $2.408$ on 3rc4, and from $0.917$ to $3.015$ on 4xal. These target-level results suggest that the overall benchmark improvements of APCyc are reflected not only in averaged metrics, but also in representative individual cases.

\begin{table}[t]
\centering
\caption{
Target-level case study on two representative targets. Baseline mean denotes the PepFlow/PepGLAD mean.
}
\label{tab:case_study}
\vspace{0.35em}
\renewcommand{\arraystretch}{0.92}
\setlength{\tabcolsep}{2.0pt}
\begin{tabular}{@{}llrrrr@{}}
\toprule
\textbf{Target} & \textbf{Method} & \textbf{Vina} $\downarrow$ & \textbf{Imm.} $\downarrow$ & \textbf{Perm.} $\uparrow$ & \textbf{Prot.} $\uparrow$ \\
\midrule
3rc4 & APCyc & \textbf{-4.960} & \textbf{-12.615} & \textbf{2.408} & \textbf{-1.637} \\
3rc4 & Baseline mean & -1.149 & -10.636 & 0.806 & -1.639 \\
\midrule
4xal & APCyc & \textbf{-6.820} & \textbf{-12.360} & \textbf{3.015} & \textbf{-1.256} \\
4xal & Baseline mean & -4.023 & -10.688 & 0.917 & -1.768 \\
\bottomrule
\end{tabular}
\renewcommand{\arraystretch}{1}
\setlength{\tabcolsep}{6pt}
\end{table}

\subsection{Ablation}
To evaluate the contribution of the proposed guidance mechanism, we conduct an ablation study by comparing the full APCyc model with a variant where the guidance component is removed. We further analyze the impact of guidance on key therapeutic properties through targeted ablations. 
As shown in Table~\ref{tab:ablation_protease_immuno}, guided optimization achieves competitive or improved protease resistance and immunogenicity compared with the unguided baseline, demonstrating controllable regulation of stability- and safety-related properties. 
Meanwhile, Table~\ref{tab:ablation_permeability} shows that the membrane-permeable variant substantially increases the permeability proxy relative to the baseline, confirming the effectiveness of guidance in steering permeability-related behavior. 
Overall, these results highlight that explicit guidance is essential for directing cyclic peptide generation toward desired physicochemical objectives.

\begin{table}[h]
\centering
\caption{Ablation on membrane permeability proxy. Higher values indicate better permeability.}
\label{tab:ablation_permeability}
\begin{tabular}{lc}
\toprule
\textbf{Method} & \textbf{Permeability proxy} $\uparrow$ \\
\midrule
Membrane-permeable (guided) & \textbf{0.107} \\
Multi-property (guided)           & -0.051 \\
Base (no guidance)           & 0.042 \\
\bottomrule
\end{tabular}
\end{table}

\section{Limitations and Ethical Considerations}

\textbf{Data and supervision constraints}
APCyc is trained on CPCore \citep{yang2025cpsea}, a curated CPSea subset derived from large-scale structure prediction resources. Although filtering and clustering improve data quality, the complexes may still inherit prediction noise, imperfect pocket geometries, and target-dependent biases. In addition, CPCore mainly covers classical cyclization chemistries, including backbone amide, disulfide, and isopeptide linkages. Generalization to alternative crosslinkers, bicyclic topologies, or non-canonical stapling chemistries may therefore require additional structural data and broader chemical annotations.

\noindent \textbf{Intended use, dual-use, and uncertainty}
APCyc is intended to support benign therapeutic discovery by proposing cyclic peptide candidates for expert review and experimental validation, rather than serving as a source of clinical recommendations. Because generative models for bioactive molecule design may be misused to search for harmful activity \citep{urbina2022dual}, practical deployment should remain within controlled research settings, follow biomedical governance and biosafety procedures, and include risk assessment, domain-expert oversight, and safety screening before synthesis \citep{topol2019high,schneider2020rethinking,pannu2024prioritizing}. In addition, safety-related objectives such as immunogenicity rely on computational predictors and may be uncertain or miscalibrated \citep{begoli2019need}. These predictions should therefore be treated as preliminary signals and complemented with additional models, expert assessment, experimental validation, and uncertainty reporting where possible \citep{amrhein2019scientists}.

\section{Conclusions}
We present APCyc, a target-aware latent diffusion framework that enables automated cyclization and property-informed design for cyclic peptides. APCyc dynamically infers pocket-adaptive linkage types and sites conditioned on the receptor context, and simultaneously optimizes multiple drug-like properties using the proposed Bayesian Posterior Guidance. As a result, APCyc addresses the critical challenge of controllable cyclic peptide generation under complex property constraints, providing a robust foundation and new insights for the design of next-generation therapeutic peptides.

\begin{acks}
This work was supported by the Guangdong Basic and Applied Basic Research Foundation
(2026A1515011793), and the Youth S\&T Talent Support Programme of Guangdong Provincial Association for Science and Technology (SKXRC2025467).
\end{acks}

\section*{GenAI Disclosure}

Parts of the manuscript preparation, including language refinement and editing, were assisted by generative AI tools. All scientific content, experimental design, implementation, and conclusions were developed and verified solely by the authors.

\bibliographystyle{ACM-Reference-Format}
\bibliography{reference}

\appendix

\section{Proof of \hyperref[prop:equivariance]{Proposition~\ref*{prop:equivariance}}}
\label{app:equivariance_proof}

\begin{proof}
Let $g \in \mathrm{SE}(3)$ denote a rigid transformation acting on equivariant
coordinates as $g(\mathbf{x}) = \mathbf{R}\mathbf{x} + \mathbf{t}$ with
$\mathbf{R} \in \mathrm{SO}(3)$. Invariant latent features remain unchanged.

\paragraph{Invariant topology prediction}
The pair representation in~\eqref{eq:pair_rep} depends only on invariant node
latents and Euclidean distances, which are preserved under rigid motions.
Therefore the predicted linkage matrix $\hat{\mathbf S}_t$ obtained via the
masked Softmax in~\eqref{eq:topo_softmax} is invariant:
\[
\hat{\mathbf S}_t(g \cdot \mathcal{Z}_t)
= \hat{\mathbf S}_t(\mathcal{Z}_t).
\]

\paragraph{Invariant Topology Injection}
The injected edge features, bias, and gating terms defined in
Eqs.~\eqref{eq:edge_inj}--\eqref{eq:gate_inj} depend only on invariant distances,
$\hat{\mathbf S}_t$, and scalar schedules. Hence all modulation coefficients
remain invariant under SE(3) transformations.

\paragraph{Equivariance of coordinate updates}
The AM-EGNN coordinate update takes the standard form
\[
\tilde z'_{i,t}
=
\tilde z_{i,t}
+
\sum_{j} \alpha_{ij,t}\,
(\tilde z_{i,t} - \tilde z_{j,t}),
\]
where the scalar weights $\alpha_{ij,t}$ depend only on invariant quantities.
Applying $g$ to both sides yields
\[
\tilde z'_{i,t}(g \cdot \mathcal{Z}_t)
=
g \cdot \tilde z'_{i,t}(\mathcal{Z}_t),
\]
which proves SE(3)-equivariance.

\paragraph{Conclusion}
Since topology prediction is invariant and coordinate updates are equivariant,
the overall topology injection mechanism preserves SE(3)-equivariance.
\end{proof}

\section{Variational Autoencoder Training Loss}
\label{vae}
Following \citet{kong2024full}, we train a joint sequence--structure variational autoencoder that maps a peptide
$\mathcal{P}=\{(s_i,\mathbf{X}_i)\}_{i=1}^{L}$ to residue-level latents
$\mathcal{Z}=\{(z_i,\tilde z_i)\}_{i=1}^{L}$,
where $z_i$ is $\mathrm{E}(3)$-invariant and $\tilde z_i$ is $\mathrm{E}(3)$-equivariant.
The decoder reconstructs $\hat s_i$ and $\hat{\mathbf{X}}_i$.

\paragraph{Overall objective.}
Following Sec.~\ref{sec:training}, the autoencoder is optimized by
\begin{equation}
\mathcal{L}_{\text{AE}}
=\frac{1}{L}\sum_{i=1}^{L}\big(
\mathcal{L}_{\text{rec}}(i)+\mathcal{L}_{\text{KL}}(i)
\big).
\end{equation}

\paragraph{Reconstruction loss.}
We use cross-entropy for residue tokens and MSE for full-atom coordinates,
together with an auxiliary physical regularization:
\begin{equation}
\mathcal{L}_{\text{rec}}(i)
=
H(s_i,\hat s_i)
+
\|\mathbf{X}_i-\hat{\mathbf{X}}_i\|_2^2
+
\gamma\,\mathcal{L}_{\text{phys}}(i).
\end{equation}

\paragraph{Auxiliary physical loss.}
To better preserve realistic geometry, $\mathcal{L}_{\text{phys}}$ supervises
$\mathrm{C}_\alpha$ coordinates, bond lengths, and side-chain dihedral angles:
\begin{equation}
\mathcal{L}_{\text{phys}}(i)
=
\lambda_{\mathrm{CA}}\mathcal{L}_{\mathrm{CA}}(i)
+
\lambda_{\mathrm{bond}}\mathcal{L}_{\mathrm{bond}}(i)
+
\lambda_{\mathrm{angle}}\mathcal{L}_{\mathrm{angle}}(i).
\end{equation}

\paragraph{Latent regularization.}
The invariant latent $z_i$ is regularized toward a standard Gaussian,
while the equivariant latent $\tilde z_i$ is anchored to the reference
$\mathrm{C}_\alpha$ coordinate $\mathbf{r}_i$:
\begin{equation}
\mathcal{L}_{\text{KL}}(i)
=
D_{\mathrm{KL}}\!\left(q_\phi(z_i)\,\|\,\mathcal{N}(\mathbf{0},\mathbf{I})\right)
+
\eta\,\|\tilde z_i-\mathbf{r}_i\|_2^2.
\end{equation}

\paragraph{Default weights.}
We set $\lambda_{\mathrm{CA}}=1.0$, $\lambda_{\mathrm{bond}}=1.0$, and
$\lambda_{\mathrm{angle}}=0.5$ in experiments, and use a small $\eta$
to stabilize equivariant latent learning.

\section{Metrics Implementation}
\label{metrics}
\subsection{Inputs and identifiers}\label{app:prop:input}
Each candidate is provided as a complex PDB (optionally gzipped). We assume a two-chain convention in the complex:
the peptide is chain \texttt{L} and the receptor/target protein is chain \texttt{R}.
We use the PDB filename stem as the peptide identifier \texttt{id}.
All sequence-based predictors operate on the peptide-chain sequence extracted from chain \texttt{L};
structure-based descriptors operate on the chain-\texttt{L} structure; affinity evaluates the \texttt{L}--\texttt{R} interface.

\subsection{Solubility (CamSol intrinsic)}\label{app:prop:sol}
To assess solubility, we employ the CamSol-intrinsic predictor on the peptide sequence and record the returned scalar
\texttt{camsol\_score} (higher indicates better solubility):
\begin{equation}
S_{\mathrm{sol}} := \texttt{camsol\_score}.
\end{equation}

\subsection{Enzymatic stability / proteolysis susceptibility (ProsperousPlus)}\label{app:prop:proteolysis}
To quantify proteolysis susceptibility, we employ ProsperousPlus in prediction mode to estimate cleavage propensity
under a fixed panel of proteases
\begin{equation}
\mathcal{P}=\{\texttt{S01.001},\texttt{A01.001},\texttt{C01.060},\texttt{M10.003},\texttt{M10.008}\}.
\end{equation}
Because ProsperousPlus expects a fixed-length context, we convert each peptide sequence into a set of 8-mer windows.
Specifically, after uppercasing and replacing non-canonical characters with \texttt{-}, we generate length-8 windows with stride 2,
ensuring the C-terminal window is included; if the number of windows exceeds $30$, we subsample evenly to at most $30$ windows.
For each window set, ProsperousPlus outputs per-position cleavage probabilities (column \texttt{pro}) for each protease.
We aggregate these raw probabilities by averaging within each protease and then summing across proteases:
\begin{equation}
\texttt{proteolysis\_sumscore}
~:=~
\sum_{p\in\mathcal{P}}
\left(
\frac{1}{N_p}\sum_{k=1}^{N_p} \texttt{pro}_{p,k}
\right),
\end{equation}
where $N_p$ is the number of probability entries produced for protease $p$ across all windows/positions.
Smaller values imply lower cleavage propensity (i.e., higher stability), and we therefore define a higher-is-better stability score as
\begin{equation}
S_{\mathrm{prot}} := -\,\texttt{proteolysis\_sumscore}.
\end{equation}

\subsection{Immunogenicity proxy (BigMHC EL/IM)}\label{app:prop:immuno}
To assess immunogenicity, we run BigMHC on a specified MHC-I allele (default: \texttt{HLA-A*02:01})
and obtain two outputs per peptide: \texttt{el} (presentation likelihood) and \texttt{im} (immunogenicity likelihood).
Peptides of length 8--11 are scored directly. For peptides longer than 11 residues, we enumerate sliding windows of lengths
$8,9,10,11$, score each window, and average the top-$K$ window scores (we use $K{=}3$) for EL and IM separately.
We report the raw immunogenicity log-score, where lower values indicate lower predicted immunogenicity, and use its sign-flipped form for internal surrogate training and guidance:
\begin{align}
I_{\mathrm{imm}} &:=
\log(\max(\texttt{el},\epsilon)) + \log(\max(\texttt{im},\epsilon)),\\
S_{\mathrm{imm}} &:= - I_{\mathrm{imm}}.
\end{align}
Thus all guidance objectives follow a higher-is-better convention.

\subsection{Physicochemical descriptors from peptide-chain structure}\label{app:prop:physchem}
To obtain structure-derived physicochemical descriptors, we analyze the peptide chain \texttt{L} extracted from the complex PDB.
We compute:
(i) \textbf{GRAVY} as the average Kyte--Doolittle hydropathy over the extracted peptide sequence;
(ii) \textbf{TPSA} and \textbf{logP} using RDKit descriptors computed from the peptide-chain PDB block;
(iii) the heavy-atom-normalized polar surface area
\begin{equation}
\texttt{rTPSA} := \frac{\texttt{TPSA}}{\#\text{heavy atoms}}.
\end{equation}

\subsection{Permeability proxy score}\label{app:prop:pproxy}
We consolidate the above descriptors into a single permeability proxy. Since the raw descriptors have different scales,
we first normalize each descriptor by dataset-level $z$-scoring:
\begin{equation}
z(x)=\frac{x-\mu(x)}{\sigma(x)},
\end{equation}
where $\mu(\cdot)$ and $\sigma(\cdot)$ are computed over the dataset (ignoring missing values; using population standard deviation).
We then define
\begin{equation}
P_{\mathrm{proxy}} :=
z(\texttt{logP}) + z(\texttt{GRAVY}) - z(\texttt{TPSA}) - z(\texttt{rTPSA}),
\end{equation}
which corresponds to higher hydrophobicity and lower polar surface area being more favorable.
\subsection{Binding affinity: Vina score-only evaluation}\label{app:prop:vina}

Binding affinity is estimated by evaluating the \emph{given} complex pose without redocking.
Chains \texttt{L} and \texttt{R} are extracted, converted to PDBQT via MGLTools.
AutoDock Vina is executed in \texttt{--score\_only} mode:
\begin{center}
\texttt{vina --receptor R.pdbqt --ligand L.pdbqt --center\_* c\_* --size\_* s\_* --score\_only}
\end{center}

The scoring box is defined from ligand bounds with padding $p$ (default $10$~\AA):
\begin{equation}
c_x=\tfrac{x_{\min}+x_{\max}}{2}, \quad
s_x=\max\!\big((x_{\max}-x_{\min})+p,\,1.0\big),
\end{equation}
and analogously for $y,z$.
The reported affinity (kcal/mol) is recorded as \texttt{vina\_score}.

\subsection{Binding affinity: RosettaScripts interface}\label{app:prop:rosetta}
In addition to Vina, we compute a Rosetta interface energy proxy with RosettaScripts.
Unlike the example FastRelax workflow, we do not perform additional relaxation in this pipeline; instead,
we directly evaluate the input complex using the \texttt{ref2015} scorefunction and a minimal protocol
(\texttt{interface\_analyze.xml}) that applies a single \texttt{Ddg} filter with \texttt{jump=1}, \texttt{repeats=1}, \texttt{repack=0} (no repacking/minimization), and scorefunction \texttt{ref2015}.
We run rosetta\_scripts.* in score-only mode and extract the scalar \texttt{dg} reported by the filter as \texttt{rosetta\_dG}.

\subsection{Affinity normalization and combined score}\label{app:prop:affinity}
Because Vina and Rosetta energies have different scales, we convert each to a higher-is-better normalized term via $z$-scoring and sign flip. We then define
\begin{align}
A_{\mathrm{ros}}&:=-z(\texttt{rosetta\_dG}),\qquad
A_{\mathrm{vina}}:=-z(\texttt{vina\_score}),\\
A_{\mathrm{aff}}&:=\tfrac{1}{2}A_{\mathrm{ros}}+\tfrac{1}{2}A_{\mathrm{vina}},
\end{align}
and use $A_{\mathrm{aff}}$ as \texttt{affinity} property to train the surrogate.

\subsection{Unified optimization direction}
For surrogate training and gradient-based guidance, all property scores are
converted to a \emph{higher-is-better} convention to ensure a consistent
optimization direction across objectives.
This transformation is used only for internal optimization.
Reported results follow the standard biochemical interpretation of each metric.

\section{Guidance Weights for Therapeutic Objectives}
\label{detail}

APCyc uses five Bayesian posterior-guided sampling configurations, \textbf{Base}, \textbf{Aff.}, \textbf{Prot.}, \textbf{Perm.}, and \textbf{Multi}, by assigning property-specific guidance weights. Table~\ref{tab:guidance_weights} lists the coefficients applied to the joint energy function in Sec.~\ref{sec:guidance}.

\begin{center}
\captionof{table}{Guidance weights for different therapeutic design objectives.}
\label{tab:guidance_weights}
\begingroup
\footnotesize
\renewcommand{\arraystretch}{1.08}
\setlength{\tabcolsep}{4.5pt}
\begin{tabular}{lccccc}
\toprule
\textbf{Obj.} & \textbf{Aff.} & \textbf{Prot.} & \textbf{Perm.} & \textbf{Sol.} & \textbf{Imm.} \\
\midrule
Base  & 0  & 0 & 0  & 0 & 0 \\
Aff.  & 10 & 0 & 0  & 0 & 0 \\
Prot. & 8  & 8 & 0  & 2 & 0 \\
Perm. & 8  & 2 & 10 & 2 & 0 \\
Multi & 6  & 4 & 4  & 3 & 5 \\
\bottomrule
\end{tabular}
\endgroup
\end{center}

\end{document}